\documentclass{article}

\usepackage[preprint]{neurips_2026}
\usepackage{macro}
\usepackage[utf8]{inputenc} 
\usepackage[T1]{fontenc}    
\usepackage{hyperref}       
\usepackage{url}            
\usepackage{booktabs}       
\usepackage{amsfonts}       
\usepackage{nicefrac}       
\usepackage{microtype}      
\usepackage{xcolor}         

\title{Masking Causality and Conditional Dependence}

\author{%
  Zou Yang\\
  Thayer School of Engineering\\
  Dartmouth College\\
  Hanover, NH, 03755 \\
  \texttt{zou.yang.th@dartmouth.edu} \\
  \And
  Sophia Xiao\\
  Thayer School of Engineering\\
  Dartmouth College\\
  Hanover, NH, 03755 \\
  \texttt{sophia.xiao.th@dartmouth.edu} \\
  \AND
  Bijan H. S. Mazaheri\thanks{\href{http://bijanmazaheri.com}{bijanmazaheri.com}, \href{cedarlab.dartmouth.edu}{cedarlab.dartmouth.edu}} \\
  Thayer School of Engineering\\
  Dartmouth College\\
  Hanover, NH, 03755 \\
  \texttt{bijan@dartmouth.edu} \\
}

\begin{document}

\maketitle

\begin{abstract}
  Many regulatory and analytic problems require that a prohibited
variable influence a decision only through a designated allowable
channel --- a conditional-independence requirement that arises in
path-specific fairness, the handling of classified information, and
the regulation of trading on non-public information, among other
settings. Such requirements may be enforced either stratum-by-stratum
or, more commonly (and more efficiently), through a single averaged
constraint on the conditional effect. We study the resulting
enforcement problem from two perspectives. From the regulator's
side, we formulate causal masking as a linear program and show that
averaged-constraint optimization almost surely produces policies that
violate the stratum-wise requirement while satisfying the averaged
one exactly. The gains from masking grow with confounding and
outcome heterogeneity, and detection requires precisely the
conditional-independence tests that average constraints aim to
avoid. From the optimizer's side, the same construction shows that
masked policies recover most of the reward of unconstrained
exploitation while being far harder to detect, making them attractive
in any setting where the basis of decisions is itself sensitive.
Together, these results argue that regulating direct dependence
through averaged statistics on observed decisions is structurally
limited, and that meaningful enforcement must operate at the level
of the decision rule itself.
\end{abstract}

\section{Introduction}\label{sec:intro}

Many regulatory and analytic settings share a common structure: a decision-maker is allowed to use some information but not other information, and dependence between the prohibited variable and the decision is permitted only \emph{through} a designated allowable channel. Statistically, this is a conditional-independence requirement --- the prohibited variable should be independent of the decision given the allowable mediator. What is forbidden is direct dependence; what is desirable, often, is dependence through the allowable path.

This structure arises across very different domains. In path-specific fairness \citep{nabi_fair_2018}, a protected attribute may legitimately influence a decision through mediators such as qualifications, or even counfounders, but not through pathways representing direct discrimination. In the handling of classified intelligence, an analyst's public statements or trades may reflect open-source reasoning but must not betray knowledge of a compromised secret. In financial markets, a trader's positions may be driven by lawful quantitative analysis but must not reflect material non-public information. In each case, the regulator's task is the same: from observed decisions, distinguish dependence that flows through the allowable channel from dependence that does not.

We study this problem from two complementary perspectives. The \emph{auditor} observes decisions (and the variables conditioned upon) and seeks to detect direct dependence that cannot be attributed to the allowable channel. The \emph{optimizer} chooses a decision policy to maximize a reward while remaining indistinguishable, from the auditor's vantage point, from a policy that respects the conditional-independence requirement. Although we phrase the problem adversarially, our results apply equally to non-adversarial settings: any reward-driven optimizer constrained only on averages will, generically, gravitate toward the same solutions an adversarial agent would deliberately construct.

\subsection{Two Forms of Regulation}

A regulator wishing to enforce conditional independence has two natural choices. The strict choice requires independence to hold \emph{stratum by stratum}: the conditional effect of the prohibited variable on the decision must vanish at every value of the allowable mediator. The averaged choice requires only that the \emph{average} conditional effect vanish, with the average taken over the distribution of the mediator. The averaged form is the one adopted in path-specific fairness optimization by \citet{nabi_learning_2019}, and it is attractive for good reasons: it is statistically tractable, integrates cleanly into optimization as a single linear constraint, and sidesteps the known information-theoretic hardness of conditional independence testing \citep{shah_hardness_2020, he_hardness_2025}.

We argue that the averaged form is fit for one purpose but not for another. As a \emph{detector} of unregulated behavior, an averaged conditional effect is informative: a decision-maker who makes no effort to disguise their use of the prohibited variable will typically register a non-zero average. As a \emph{regulation}, however, the averaged form is brittle. An optimizer can satisfy an averaged constraint exactly while recovering most of the reward available under no constraint at all, by trading positive direct effects in one stratum against negative direct effects in another. Outcomes are typically heterogeneous across strata, and a constraint that restricts only the average leaves this heterogeneity available for exploitation. We call policies that exploit it \emph{causally masked}.

The principal warning of this paper is not that an averaged constraint \emph{can} be escaped, but that it \emph{will} be escaped whenever the constraint binds, and there is a reward at stake. It is not obvious, ex ante, that treating subgroups differently should help maximize an objective that itself averages over those subgroups; the very framing of the constraint can lull both regulator and optimizer into believing that compliance and reward are aligned. Our linear-programming formulation makes the mechanism transparent and shows that the gains from masking are typically substantial.

\subsection{Summary of Contributions}

Our contributions speak to both sides of the regulatory relationship.

\paragraph{Regulator's side.} \Cref{sec: LP formulation} formulates causal masking as a linear program that maximizes an auxiliary reward subject to an averaged conditional-effect constraint, and compares its solutions to those of stratified-constraint and unconstrained policies. We establish theoretical results (\Cref{sec: theory}) showing that the utility gap between a stratum-wise constrained policy and a masked policy grows with confounding between the prohibited variable and the mediator, and with outcome heterogeneity across the mediator. Through genericity arguments (Appendix~\ref{apx: measure theory}), we show that averaged-constraint optimization almost surely produces masked, rather than stratum-wise compliant, policies --- whether or not the optimizer is adversarial. We further argue that distinguishing masked policies from compliant ones using observed decisions alone requires precisely the conditional-independence tests that other approaches aimed to avoid, so masked solutions are hard to detect and likely to persist. We illustrate these phenomena empirically on the COMPAS dataset \citep{angwin_machine_2022} in \Cref{sec: empirical}.\footnote{Code for both real-data and synthetic experiments is available at an \href{https://anonymous.4open.science/r/causal_masking-F908/README.md}{anonymized GitHub link} and was run in $<$2 hours on a regular CPU.}

\paragraph{Optimizer's side.} The same setup has constructive value for any agent operating under an averaged constraint who wishes to act on prohibited information. Relative to a stratum-wise constrained policy, masked policies recover most of the reward available under unconstrained exploitation; relative to a pure-exploitation policy, masked policies are dramatically harder to detect, since an averaged audit cannot distinguish them from compliant policies and a conditional-independence audit is information-theoretically hard \citep{shah_hardness_2020}. This trade-off is relevant in any setting where the basis of one's decisions is itself sensitive: a trader concealing which signals drive a strategy, an analyst protecting a source, or --- more concerningly --- an automated decision system rewarded for performance and constrained only on averages.

Taken together, these results imply that regulation of direct dependence based on averaged statistics over observed decisions is structurally limited. Meaningful enforcement requires \emph{model-level}, in-process mechanisms \citep{pessach_review_2022} that constrain the decision rule itself, rather than reactive audits of its outputs. For fairness, this corresponds to notions of counterfactual fairness \citep{kusner_counterfactual_2017} and path-specific counterfactual fairness \citep{chiappa_path-specific_2019}.

\subsection{Related Works} \label{subsec: related works}

\paragraph{Formalizing Fairness.}
Many works define fairness as an independence property between
protected attributes and decisions \citep{pessach_review_2022}.
Statistical parity requires the protected attribute and decision to
be marginally independent \citep{dwork2012fairness}; conditional
statistical parity relaxes this to parity within subgroups
\citep{corbett-davies_algorithmic_2017}. \citet{nabi_fair_2018}
introduce path-specific fairness via causal-effect constraints that
distinguish permissible from impermissible pathways, and
\citet{kusner_counterfactual_2017} propose a counterfactual notion
at the individual level via structural causal models.
\citet{chiappa_path-specific_2019} extends the path-specific
framework to a counterfactual formulation, and
\citet{plecko_causal_2024} provide a comprehensive causal toolkit
for fair machine learning. These approaches address fairness from
the modeler's side; our work is complementary, characterizing what
a regulator can enforce using only observed decisions.

\paragraph{Optimization and Fairness.}
\citet{dwork2012fairness} first formalized fairness as a constraint
within an optimization problem over individual decisions.
\citet{corbett-davies_algorithmic_2017} extended this to quantify the
``cost of fairness'' in the objective, and
\citet{corbett-davies_measure_2023} studied the geometry of fairness
constraints, showing they often produce suboptimal solutions that
harm the groups they intend to protect. Our optimization differs in
focusing on decision rates across groups rather than individual
decisions, and in studying the consequences of \emph{averaged}
effect-based constraints specifically. \citet{nabi_learning_2019}
present an alternative policy-optimization formulation for online
reinforcement learning that uses the path-specific effect constraints
of \citet{nabi_fair_2018}; those constraints are also averaged, so
our results on the limits of averaged regulation apply to that
setting as well.

\paragraph{Fairness in Predictions and Counterfactual Harm.}
A parallel literature studies fairness as a property of predictions
evaluated against ground-truth outcomes --- including equalized odds
\citep{barocas_fairness_2019}, equal opportunity
\citep{hardt_equality_2016}, predictive parity
\citep{chouldechova_fair_2017}, calibration
\citep{pleiss_fairness_2017}, and principal fairness
\citep{imai_principal_2023}. Our setting differs in that the
regulator observes decisions only, with no separate outcome label.
A separate line of work formalizes individual-level harm through
counterfactuals \citep{richens_counterfactual_2022,
kallus_whats_2022}, bounding or optimizing counterfactual harm assuming the policy or structural causal model is fixed --- whereas we study how an optimizer responds to an averaged
regulatory constraint and whether the resulting policy is detectable from observed decisions alone.

\paragraph{Information Leakage Games.}
A structurally similar two-player problem appears in quantitative
information flow, where leakage of a secret to an adversary is
formalized via gain-function--based measures \citep{smith_foundations_2009} and as a game between a defender choosing a channel and an adversary estimating the secret \citep{alvim_information_2022}. Closer still in form is information-theoretic steganography
\citep{cachin_information-theoretic_2000}, where a sender embeds a
message in a cover so that the stego distribution is statistically
indistinguishable from the cover; masked policies are a direct
analogue, embedding dependence on the prohibited variable while
remaining indistinguishable from compliant policies under the
auditor's averaged test. Our results echo the security
literature's emphasis that detection must be calibrated to an
adaptive sender. Whether randomization-based defenses from this literature are sufficient to mask direct dependence is an empirical question; we
return to it in \Cref{sec:discussion}.

\textbf{Detecting Heterogeneous Effects.} Detecting causal masking
from observed decisions reduces to a heterogeneity test on the
conditional effect. Methods for estimating
heterogeneous treatment effects offer one route
\citep{wager_estimation_2018, mazaheri_synthetic_2025}, but
hypothesis testing suffices for detection, and existing heterogeneity
tests \citep{ding_randomization_2016, chung_permutation_2021} apply
directly when covaraites are continous. These hypothesis tests will be essential for assessing
fairness when the underlying decision mechanism is unknown, as is
generally the case for human decisions or proprietary models. They
also provide a natural starting point for extending our diagnostic
framework to settings with continuous covariates, where binning
introduces researcher degrees of freedom.

\section{Preliminaries}

\paragraph{Notation.}
We use capital Roman letters to denote random variables and the corresponding lowercase letters to denote their realizations. We write $x_i$ as shorthand for $X = i$. Lists of random variables are bolded, e.g., $\bvec{X}$, and parameter vectors are denoted with vector notation, e.g., $\vec\alpha$.

\paragraph{Causal Graphs.}
We describe our settings using \emph{structural causal models (SCMs)} \citep{pearl2009causality}, in which directed edges between random variables (vertices) represent direct causal relationships. A central insight of \citet{pearl2009causality} is that the causal structure determines the statistical independencies of the variables via the graphical d-separation rules. The faithfulness assumption, common in causal-discovery literature \citep{vowels2022d}, asserts that d-connected variables are statistically dependent. Faithfulness holds with full Lebesgue measure \citep{meek_strong_1995}, but \citet{uhler2013geometry} showed that the set of distributions $\varepsilon$-close to a faithfulness violation grows quickly with $\varepsilon$. Finite-sample conditional-independence tests are therefore much more vulnerable to ``near'' faithfulness violations than asymptotic guarantees suggest --- a point we return to when discussing relaxations of the regulatory constraint.

\paragraph{Causal Effects.}
We study the causal effect of a prohibited variable $P$ on a decision $D$, with the goal of regulating this effect to flow only through an allowable mediator $X$. The natural target of regulation is a \emph{path-specific effect} (PSE) \citep{pearl2009causality, nabi_fair_2018}: the contribution to the effect of $P$ on $D$ that does \emph{not} flow through $X$. Under the standard adjustment formula
\begin{equation} \label{eq: ate def}
    ATE \coloneqq \sum_{x} \Pr(x) \big[\E(D \given x, p_1) - \E(D \given x, p_0)\big],
\end{equation}
this PSE coincides with the average treatment effect (ATE) of $P$ on $D$ when $X$ is a backdoor adjustment set (e.g., $X$ contains the confounders of $P$ and $D$), and with the controlled direct effect when $X$ is a mediator. In either case, requiring this quantity to vanish forbids direct dependence of $D$ on $P$ while permitting dependence routed through $X$. For convenience, we will use the term ATE to refer to this quantity.

The terms inside the sum in Eq.~\eqref{eq: ate def} are the \emph{conditional} average treatment effects (CATEs): each measures the change in $\E(D)$ within the stratum $X = x$ when $P$ is varied. A \emph{stratified} regulation requires every CATE to vanish; an \emph{averaged} regulation requires only that the ATE itself vanish. The gap between these two requirements is the central object of this paper.

The propensity score $\Pr(p \given x)$ --- the probability that an individual belongs to group $P = p$ given covariates $X = x$ --- governs the relative weight of CATEs in the ATE, and will appear in our analysis of masking gains in \Cref{sec: theory}.

\section{Motivating Example} \label{sec: motivating example}

We illustrate how an optimizer can satisfy an averaged constraint exactly while extracting reward from heterogeneous treatment of subgroups. The example is drawn from the fairness setting because the masking mechanism is most transparent there; \Cref{sec: empirical} demonstrates the same phenomenon on the COMPAS dataset, and we discuss analogous structures in trading and intelligence settings at the end of this section.

\paragraph{Setup.}

A university uses an admissions model ($D = 1$ corresponds to admission) for two departments, $X \in \{0,1\}$, and is required to be ``fair'' with respect to a protected attribute $P \in \{0,1\}$. The structure mirrors the Berkeley admissions case \citep{bickel_sex_1975}; for clarity we use the deliberately simple distribution in \Cref{tab: synth crime data}.

\begin{table}[h]
    \centering
    \scalebox{.75}{\begin{tabular}{|c|c|c|c|c|}
    \hline
    $x$ & $p$ & $\Pr(x,p)$ & $\Pr(Y=1 \given x, p)$ & $\Pr(D_{\text{mask}}=1\mid x,p)$\\
    \hline
    $0$ & $0$ & $\nicefrac{1}{15}$ & $.5$ & $.5$\\
    \hline
    $0$ & $1$ & $\nicefrac{9}{15}$ & $.5$ & $0$\\
    \hline
    $1$ & $0$ & $\nicefrac{4}{15}$ & $.25$ & $0$\\
    \hline
    $1$ & $1$ & $\nicefrac{1}{15}$ & $1$ & $1$\\
    \hline
\end{tabular}}
    \caption{A joint distribution on $(X, P)$, the conditional graduation rate $\Pr(Y=1 \mid x, p)$, and a masking policy $D_{\text{mask}}$.}
    \label{tab: synth crime data}
\end{table}

The school's objective is graduations among admitted students ($Y \in \{0,1\}$). The protected attribute $P$ may not influence $D$ directly, but may legitimately influence $D$ through the department $X$ --- e.g., because $P$ correlates with which department applicants apply to and departments differ in selectivity.

\begin{figure}[h]
    \centering
    \scalebox{.65}{\begin{tikzpicture}[
    -latex, auto, 
    node distance = 1.5cm and 1.5cm, 
    on grid, 
    ultra thick,
    state/.style={circle, draw, minimum width=.5cm, ultra thick, fill=white}
]
    \begin{scope}[shift={(0,0)}]
        \node[state] (P-a) {$P$};
        \node[state] (X-a) [below = of P-a] {$X$};
        \node[state] (D-a) [left = of X-a] {$D$};
        \node[state] (Y-a) [right = of X-a] {$Y$};
        
        \draw[arrows=-] (X-a) -- (Y-a);
        \draw[arrows=-] (X-a) -- (P-a);
        \draw[arrows=-] (P-a) -- (Y-a);
        \path (X-a) edge (D-a);
    \end{scope}

    \begin{scope}[shift={(5.5cm,0)}]
        \node[state] (P-c) {$P$};
        \node[state] (X-c) [below = of P-c] {$X$};
        \node[state] (D-c) [left = of X-c] {$D$};
        \node[state] (Y-c) [right = of X-c] {$Y$};
        \draw[arrows=-] (X-c) -- (P-c);
        \draw[arrows=-] (P-c) -- (Y-c);
        \path (P-c) edge[red!60!black] (D-c);
        \path (X-c) edge (D-c);
        \draw[arrows=-] (X-c) -- (Y-c);
    \end{scope}

    \begin{scope}[on background layer]
        \coordinate (top-stretch-ab) at ([yshift=.8cm]P-a.north);
        \coordinate (top-stretch-c)  at ([yshift=.8cm]P-c.north);

        \node[
            fit=(P-a)(D-a)(Y-a)(top-stretch-ab), 
            rounded corners, 
            fill=blue!10, 
            draw=blue!60!black,
            line width=2pt,
            inner xsep=10pt,
            inner ysep=5pt
        ] (blue-box) {};
        \node[text=blue!60!black, anchor=north, yshift=-5pt] at (blue-box.north) 
            {\Large \textbf{Compliant}};

        \node[
            fit=(P-c)(D-c)(Y-c)(top-stretch-c), 
            rounded corners, 
            fill=red!10, 
            draw=red!70!black,
            line width=2pt,
            inner xsep=20pt,
            inner ysep=5pt
        ] (red-box) {};
        \node[text=red!70!black, anchor=north, yshift=-5pt] at (red-box.north) 
            {\Large \textbf{Masked / Exploitative}};
    \end{scope}
\end{tikzpicture}}
    \caption{Causal diagrams for the two policy classes. Undirected edges indicate flexibility in causal direction and the possibility of unobserved confounding. The directed edge $X \rightarrow D$ represents the allowable use of $X$ by the decision rule; the red edge $P \rightarrow D$ in the right panel is the prohibited direct dependence introduced by exploitative and masked policies.}
    \label{fig:causal diagrams}
\end{figure}

\Cref{fig:causal diagrams} gives causal diagrams for the two policy classes. The left (compliant) diagram has no $P \rightarrow D$ edge, so $P \indep D \given X$. The right (masked/exploitative) diagram introduces the red $P \rightarrow D$ edge: the decision depends directly on $P$. Distinguishing these graphs from observed decisions alone is exactly the auditor's task.

\paragraph{Stratified versus Averaged Constraints.}

Suppose a regulator wishes to enforce $P \indep D \given X$.The strict, stratified version of this requirement is that each CATE must vanish:
\begin{align*}
    \E(D \given x_0, p_1)  - \E(D \given x_0, p_0) &= 0, & \E(D \given x_1, p_1) - \E(D \given x_1, p_0) &= 0.
\end{align*}
The averaged version is much milder:
\begin{align*}
    ATE &= \Pr(x_0)\big(\E(D \given x_0, p_1)  - \E(D \given x_0, p_0)\big) + \Pr(x_1)\big(\E(D \given x_1, p_1) - \E(D \given x_1, p_0)\big) = 0.
\end{align*}
The two are equivalent only if each CATE happens to vanish individually; otherwise, the averaged constraint admits CATEs of opposite sign that cancel.

\paragraph{Evading the Averaged Constraint.}

Every policy satisfying the stratified constraint admits at most $50\%$ of its students to graduation, since both fair admission rules treat $X = 0$ and $X = 1$ identically with respect to $P$. An optimizer willing to use $P$ openly can do better by admitting only $P = 1$ applicants in the second department ($X = 1$), achieving a perfect graduation rate. This exploitative strategy registers an ATE of $1/3$ and is flagged by either form of regulation.

The masking strategy $D_{\text{mask}}$ in \Cref{tab: synth crime data} is more subtle. It admits $P = 0$ applicants in $X = 0$ and $P = 1$ applicants in $X = 1$, rejecting the other two strata, achieving an ATE of exactly zero by offsetting positive direct effects in $X = 0$ against negative direct effects in $X = 1$. The policy is unfair within each stratum --- it disfavors the $P = 0$ group when $X = 1$ and the $P = 1$ group when $X = 0$ --- but it satisfies the averaged constraint exactly, and it raises the graduation rate from $\nicefrac{1}{2}$ (compliant) to $\nicefrac{5}{6}$. An optimizer responsive to the graduation reward and constrained only on average is therefore likely to converge on a policy of this form.

\paragraph{Other Instantiations.}

The same structure arises whenever a prohibited variable is regulated through an averaged effect on observed decisions. A parole board may need to show that release decisions ($D$) are free of bias by a protected attribute ($P$) when controlling for case details ($X$); we study this case empirically in \Cref{sec: empirical}. A trading desk may need to certify that its trades ($D$) are independent of material non-public information ($P$) given lawful quantitative signals ($X$). An analyst's public statements ($D$) may need to be independent of classified information ($P$) given the open-source record ($X$). In every case, an averaged constraint on the effect of $P$ on $D$ leaves the same masking maneuver available.

\section{Causal Masking} \label{sec: LP formulation}

We formalize the masking problem as a linear program. Let
$D \in \{0, 1\}$ be a decision representing ``when to participate''
--- granting parole, making a hire, admitting a student, executing
a trade. The goal is to maximize a reward $Y$ when $D = 1$, i.e.,
$\mathbb{E}[Y \mid d_1]$, equivalently $\mathbb{E}[Y \mid d_1]
\Pr(d_1)$ subject to a fixed participation rate $\Pr(d_1) = \rho$.

Features are divided into a prohibited variable $P \in \{0, 1\}$
and an allowable mediator $X \in \{1, \ldots, k\}$. We treat both
as discrete; continuous or multivariate $X$ can be reduced to this
setting by clustering, with $k$ a natural resolution parameter. 
A decision policy is a function $\alpha(x, p) \in [0, 1]$, with
$D \mid x, p \sim \text{Bernoulli}(\alpha(x, p))$, and we write
$\gamma_{x,p} = \mathbb{E}[Y \mid x, p]$ and
$\pi_{x,p} = \Pr(x, p)$.

\subsection{LP Formulation}

Assuming $D \indep Y \mid X, P$, the expected reward of a policy is
$\mathcal{W}(D) = \sum_{x,p} \gamma_{x,p}\, \alpha_{x,p}\, \pi_{x,p}$,
and the unconstrained optimization, subject to a constant participation rate $\rho$, is
\begin{equation} \label{eq: base lp}
    \max_{\{\alpha_{x,p}\}} \mathcal{W}(D)
    \quad \text{s.t.} \quad
    0 \le \alpha_{x,p} \le 1 \;\; \forall x,p,
    \;\;
    \sum_{x,p} \alpha_{x,p}\, \pi_{x,p} = \rho.
\end{equation}

\begin{proposition}[Optimal Exploit Policy] \label{prop: opt exploit}
The \emph{exploit} policy solves Eq.~\eqref{eq: base lp}, using $X$
and $P$ without restriction. Its optimum participates greedily in
$(x, p)$ pairs in decreasing order of $\gamma_{x,p}$ until the
participation rate $\rho$ is exhausted.
\end{proposition}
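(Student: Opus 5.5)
The plan is to recognize Eq.~\eqref{eq: base lp} as a continuous (fractional) knapsack problem and prove greedy optimality by an exchange argument. Substitute $m_{x,p} = \alpha_{x,p}\pi_{x,p}$, the probability mass committed to stratum $(x,p)$. The program becomes: maximize $\sum_{x,p}\gamma_{x,p} m_{x,p}$ subject to $0 \le m_{x,p} \le \pi_{x,p}$ and $\sum_{x,p} m_{x,p} = \rho$. This is linear with a single budget constraint. Each unit of budget placed in stratum $(x,p)$ earns $\gamma_{x,p}$, and stratum $(x,p)$ has capacity $\pi_{x,p}$. We assume $0\le\rho\le 1$ so the program is feasible; strata with $\pi_{x,p}=0$ can be dropped, since their $\alpha$ does not affect the objective.

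Next, define the greedy solution. Order the strata so that $\gamma_{(1)} \ge \gamma_{(2)} \ge \cdots$, with ties broken arbitrarily. Fill $m_{(1)}=\pi_{(1)}$, then $m_{(2)}=\pi_{(2)}$, and so on until the cumulative mass would exceed $\rho$. Give the boundary stratum $j$ the fractional remainder, and set all later strata to zero. Equivalently, $\alpha_{(i)}=1$ for $i<j$, $\alpha_{(j)}\in[0,1]$, and $\alpha_{(i)}=0$ for $i>j$. This solution is feasible by construction.

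Optimality follows from a swap argument. Take any feasible $m$ that differs from the greedy $m^g$. Some stratum $a$ satisfies $m_a < m^g_a$, so $a$ is not after $j$ in the order. Some stratum $b$ satisfies $m_b > m^g_b$, so $b$ is not before $j$. Moreover, $a$ precedes $b$: if both were $j$ we would have $m_j<m_j^g$ and $m_j>m_j^g$ at once, which is impossible. Hence $\gamma_a \ge \gamma_b$. Shift $\epsilon=\min(m^g_a-m_a,\,m_b-m^g_b)>0$ of mass from $b$ to $a$. The new point is still feasible, and the objective changes by $\epsilon(\gamma_a-\gamma_b)\ge 0$. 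Each swap closes at least one coordinate of the gap to $m^g$, so finitely many swaps reach $m^g$ without decreasing the objective. Therefore $\mathcal{W}(m^g)\ge\mathcal{W}(m)$.

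A cleaner alternative is an LP duality certificate. Take a dual variable $\lambda=\gamma_{(j)}$ for the budget constraint and $\mu_{x,p}=\max(\gamma_{x,p}-\lambda,0)$ for the upper bounds. Complementary slackness then holds at the greedy point: strata with $\gamma>\lambda$ are saturated, and strata with $\gamma<\lambda$ are empty.

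No step is a real obstacle. The only care needed is bookkeeping around ties and the fractional boundary stratum. With ties in $\gamma$, the optimum is not unique, and the proposition should be read as saying the greedy policy is \emph{an} optimum. I would state this explicitly and note that the exchange argument handles ties, since the swap inequality $\gamma_a\ge\gamma_b$ is non-strict.
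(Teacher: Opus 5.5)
The paper states this proposition without proof: Appendix~\ref{apx: deferred proofs} proves only the three theorems of \Cref{sec: theory}, and the result is implicitly treated as the standard fractional-knapsack fact. Your argument therefore supplies a proof rather than paralleling one, and it is correct.

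The substitution $m_{x,p}=\alpha_{x,p}\pi_{x,p}$ turns Eq.~\eqref{eq: base lp} into a single-budget, box-constrained LP whose return per unit of budget is $\gamma_{x,p}$. The exchange step is sound. $m_a<m^g_a$ forces $m^g_a>0$, so $a$ sits at or before the boundary index $j$. $m_b>m^g_b$ forces $m^g_b<\pi_b$, so $b$ sits at or after $j$. Since $a\neq b$, $a$ strictly precedes $b$. Each shift of $\epsilon$ moves both coordinates toward $m^g$ without overshooting, so the number of disagreeing coordinates strictly drops.

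Of your two routes, the exchange argument is more elementary, but the duality certificate gives more. Every primal optimum of an LP satisfies complementary slackness with any dual optimum, here $(\lambda,\mu)$ with $\lambda=\gamma_{(j)}$. So \emph{every} optimum saturates the strata with $\gamma_{x,p}>\gamma_{(j)}$ and leaves empty those with $\gamma_{x,p}<\gamma_{(j)}$. The optimum is therefore unique up to how the leftover mass is split among strata tied at the boundary value. That is exactly the sense in which ``its optimum'' in the statement should be read, and it sharpens your remark about ties.

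Your reformulation also makes explicit two points the paper blurs elsewhere: the ranking key is $\gamma_{x,p}$, not $\gamma_{x,p}\pi_{x,p}$, and the optimum generally has a fractional boundary stratum. The genericity argument in Appendix~\ref{apx: measure theory} instead describes the exploit optimum as a pure policy concentrated at $\arg\max_{x,u}\gamma_{x,u}\pi_{x,u}$. That description does not respect the participation constraint $\sum_{x,p}\alpha_{x,p}\pi_{x,p}=\rho$ and disagrees with the greedy-by-$\gamma$ description you have proved.
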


The two regulated policies are obtained by adding a single
constraint to Eq.~\eqref{eq: base lp}.

\begin{proposition}[Optimal $\varepsilon$-Unaware Policy]
\label{prop: opt unaware}
The \emph{$\varepsilon_{\text{una}}$-unaware} policy
$D_{\text{una}}(\varepsilon_{\text{una}})$ enforces approximate
stratified independence $P \indep D \mid X$ via
\begin{equation*}
    -\varepsilon_{\text{una}} \leq \alpha_{x,1} - \alpha_{x,0}
    \leq \varepsilon_{\text{una}} \quad \forall x = 1, \ldots, k.
\end{equation*}
At $\varepsilon_{\text{una}} = 0$, define $w_x^{\text{avg}}
\coloneqq \E[Y \mid x] = \gamma_{x,1}\Pr(p_1 \mid x) +
\gamma_{x,0}\Pr(p_0 \mid x)$. The optimal policy participates in
strata in decreasing order of $w_x^{\text{avg}}$, treating both
$P$-groups within a stratum identically.
\end{proposition}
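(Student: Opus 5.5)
With $\varepsilon_{\text{una}}=0$, the constraint forces $\alpha_{x,1}=\alpha_{x,0}$ for every $x$. I would write this common value as $\beta_x\in[0,1]$ and substitute it into Eq.~\eqref{eq: base lp}, which collapses the LP to one over strata alone. For the objective, $\sum_{x,p}\gamma_{x,p}\alpha_{x,p}\pi_{x,p}=\sum_x \beta_x(\gamma_{x,0}\pi_{x,0}+\gamma_{x,1}\pi_{x,1})$. Dividing each summand by $\Pr(x)=\pi_{x,0}+\pi_{x,1}$ gives $\sum_x \beta_x \Pr(x)\, w_x^{\text{avg}}$, using $\pi_{x,p}=\Pr(x)\Pr(p\mid x)$. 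The participation constraint becomes $\sum_x \beta_x\Pr(x)=\rho$. Strata with $\Pr(x)=0$ contribute nothing to either sum, so I would discard them.

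The reduced problem is now exactly the form treated in Proposition~\ref{prop: opt exploit}, with cells indexed by $x$ alone, masses $\Pr(x)$ and values $w_x^{\text{avg}}$. Invoking that result directly, the optimum fills strata greedily in decreasing order of $w_x^{\text{avg}}$ until $\rho$ is exhausted: $\beta_x=1$ for the top strata, one fractional stratum, and $\beta_x=0$ for the rest. Setting $\alpha_{x,0}=\alpha_{x,1}=\beta_x$ then gives the stated policy. Both $P$-groups in a stratum are treated identically, because the constraint forced it.

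For a self-contained argument I would reprove the greedy step as a fractional-knapsack exchange argument. Change variables to $m_x=\beta_x\Pr(x)\in[0,\Pr(x)]$, so the problem is to maximize $\sum_x w_x^{\text{avg}} m_x$ subject to $\sum_x m_x=\rho$. Take any feasible solution with $m_x<\Pr(x)$ and $m_{x'}>0$ where $w_x^{\text{avg}}>w_{x'}^{\text{avg}}$. Shifting mass $\delta$ from $x'$ to $x$ keeps feasibility and does not decrease the objective. Repeating this shift yields the greedy solution, which is therefore optimal. With ties the optimum is not unique, and I would phrase the conclusion as saying the greedy policy is optimal.

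I expect no real obstacle here. The only care points are handling strata with $\Pr(x)=0$ and assuming $\rho\le 1$ so the LP is feasible.
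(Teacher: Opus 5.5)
Your proof is correct and matches the reasoning the paper relies on. The paper gives no separate proof of this proposition, but in Section~\ref{sec: theory} and in Step~1 of the proof of Theorem~\ref{thm: fair vs mask corrected} it uses exactly your reduction: set $\alpha_{x,1}=\alpha_{x,0}$, rewrite the objective as $\sum_x \beta_x \Pr(x)\, w_x^{\text{avg}}$, and water-fill strata by $w_x^{\text{avg}}$ as in Proposition~\ref{prop: opt exploit}; your fractional-knapsack exchange argument supplies the justification the paper leaves implicit.
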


\begin{proposition}[Optimal $\varepsilon$-Masking Policy]
\label{prop:mask}
The \emph{$\varepsilon_{\text{mask}}$-masking} policy
$D_{\text{mask}}(\varepsilon_{\text{mask}})$ relaxes the unaware
constraint to bound only the ATE:
\begin{equation*}
    -\varepsilon_{\text{mask}} \leq \sum_{x} (\pi_{x, 0} + \pi_{x, 1})
    (\alpha_{x,1} - \alpha_{x,0}) \leq \varepsilon_{\text{mask}}.
\end{equation*}
\end{proposition}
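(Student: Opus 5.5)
Proposition~\ref{prop:mask} is essentially a definition: the masking policy is the optimizer of Eq.~\eqref{eq: base lp} with the single ATE constraint added. What needs proof is that this program is well posed and that its optimum has the structure used later. Concretely, I would show three things: the constraint is exactly the averaged requirement of Eq.~\eqref{eq: ate def}; the feasible set is a nonempty polytope, so an optimal vertex exists; and that vertex has at most two fractional coordinates, with everything else decided greedily.

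\textbf{Step 1: the constraint is the ATE bound.} Write $\E(D \given x, p) = \alpha_{x,p}$, which holds because $D \given x,p \sim \text{Bernoulli}(\alpha(x,p))$. Also $\Pr(x) = \pi_{x,0} + \pi_{x,1}$. Substituting both into Eq.~\eqref{eq: ate def} gives
\[
ATE = \sum_x (\pi_{x,0}+\pi_{x,1})(\alpha_{x,1}-\alpha_{x,0}),
\]
which is exactly the displayed constraint. Unlike the unaware program of Proposition~\ref{prop: opt unaware}, only $|ATE| \le \varepsilon_{\text{mask}}$ is imposed.

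\textbf{Step 2: feasibility and existence.} The objective $\mathcal{W}$ is linear in $\vec\alpha$. The constraints are the box $[0,1]^{2k}$, the participation equality, and two half-spaces, so the feasible set is a compact polytope. It is nonempty: the constant policy $\alpha_{x,p} \equiv \rho$ has zero ATE and participation $\rho$. Hence an optimum exists and is attained at a vertex. This also gives the nesting
\[
D_{\text{una}}(0) \subseteq D_{\text{mask}}(\varepsilon_{\text{mask}}) \subseteq \text{exploit feasible set},
\]
since stratum-wise zero CATEs imply zero ATE. The optimal values are therefore ordered, and the masking value is nondecreasing in $\varepsilon_{\text{mask}}$.

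\textbf{Step 3: vertex structure.} A vertex in $\R^{2k}$ needs $2k$ tight, linearly independent constraints. Besides the box constraints, at most two are available: the participation equality and at most one side of the ATE bound. So at least $2k-2$ coordinates are at $0$ or $1$, and at most two are fractional.

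Next, apply Lagrangian/KKT duality with multiplier $\lambda$ for participation and $\mu$ for the active ATE side. Each cell then has an adjusted score
\[
\gamma_{x,p} - \lambda \pm \mu\,\frac{\pi_{x,0}+\pi_{x,1}}{\pi_{x,p}},
\]
with $+\mu$ for $p=1$ and $-\mu$ for $p=0$. The optimal policy sets $\alpha_{x,p}=1$ when the score is positive and $0$ when it is negative. This is a greedy rule in the ATE-penalized scores, analogous to Proposition~\ref{prop: opt exploit}.

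\textbf{Main obstacle.} The hard part is characterizing the sign of $\mu$ and the case split. If the exploit optimum already satisfies $|ATE| \le \varepsilon_{\text{mask}}$, then $\mu = 0$ and the masking policy coincides with the exploit policy. Otherwise the constraint binds at $+\varepsilon_{\text{mask}}$ or $-\varepsilon_{\text{mask}}$, according to the sign of the exploit policy's ATE. To settle this I would use the monotonicity of the LP value in $\varepsilon_{\text{mask}}$, which is a concave piecewise-linear parametric LP, to argue that the binding side is stable. Degenerate ties in the scores would be handled by the genericity argument deferred to Appendix~\ref{apx: measure theory}.
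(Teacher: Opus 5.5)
Your reading is right. The paper gives no proof of this proposition because it is a definition: the masking program is Eq.~\eqref{eq: base lp} plus one ATE slab. The only claim the text attaches to it is that the unaware constraint implies the masking one, so $\mathcal{W}(D_{\text{mask}})\ge\mathcal{W}(D_{\text{una}})$. Your Steps~1--2 cover exactly that content: identifying the slab with $|ATE|\le\varepsilon_{\text{mask}}$ via $\mathbb{E}(D\mid x,p)=\alpha_{x,p}$ and $\Pr(x)=\pi_{x,0}+\pi_{x,1}$, nonemptiness via $\alpha\equiv\rho$, and nesting of the feasible sets.

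Step~3 goes beyond anything the paper proves, and it is correct. It is also more accurate than the paper's only description of the masking optimum, the double-threshold rule $\alpha_{x,1}=\mathbf{1}(w_{x,1}>\lambda)$, $\alpha_{x,0}=\mathbf{1}(w_{x,0}>-\lambda)$ in Appendix~\ref{apx: measure theory}.
\begin{itemize}
\item Multiplying your scores by $\pi_{x,p}/\Pr(x)>0$ gives $w_{x,p}-\lambda\Pr(p\mid x)\pm\mu$. So the paper's rule is yours with the participation multiplier set to zero.
\item The paper discards fractional rates as a measure-zero case, but your vertex count shows they are generic. The participation equality generically forces one fractional coordinate. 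An active ATE constraint (always active at $\varepsilon_{\text{mask}}=0$) generically forces a second. This is consistent with the fractional rates the paper itself uses in the proof of Theorem~\ref{thm: fair vs mask corrected}.
\end{itemize}

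Two sharpenings of your extra claims:
\begin{itemize}
\item Your scores require $\pi_{x,p}>0$. A cell with $\pi_{x,p}=0<\Pr(x)$ appears only in the ATE row and acts as free slack.
\item What determines the binding side is not monotonicity in $\varepsilon_{\text{mask}}$ but concavity in $t$ of $v(t)=\max\{\mathcal{W}(D) : \text{box and participation constraints},\ ATE=t\}$. The ATE values of exploit optima form an interval $T^*$. If $T^*$ lies above $\varepsilon_{\text{mask}}$, then $v$ is strictly increasing on the feasible $t\le\varepsilon_{\text{mask}}$. Hence every masking optimum has ATE exactly $\varepsilon_{\text{mask}}$.
\end{itemize}
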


We will usually consider $\varepsilon_{\text{una}} =
\varepsilon_{\text{mask}} = 0$, yielding the strict policies
$D_{\text{una}}$ and $D_{\text{mask}}$; nonzero $\varepsilon$
Relaxations are used to study finite-sample behavior in
Appendix~\ref{apx: synthetic}. Since the unawareness constraint implies the
masking constraint, the optimal masking policy is always \emph{at
least as good} as the optimal unaware policy.

\section{Theoretical Results} \label{sec: theory}

This section establishes key properties of causal masking. We
lower-bound the benefit of masking relative to the unaware solution
and argue that masking is a \emph{natural outcome} of optimization
under an averaged constraint, even when the optimizer is not
adversarial. All proofs are in Appendix~\ref{apx: deferred proofs}.

\subsection{Improvements from Masking}

The optimal unaware policy $D_{\text{una}}$ is a ``water-filling''
algorithm that selects strata in decreasing order of $w_x^{\text{avg}}$
until the participation budget $\rho$ is met. If $\rho$ is small
enough, this reduces to choosing the single best stratum
$x^* = \arg\max_x w_x^{\text{avg}}$. From this baseline, an optimizer
can swap participation between groups in different strata to improve
utility while preserving an averaged constraint, scaling rates to
keep $\rho$ fixed. This swap-and-scale construction yields a lower
bound on the masking advantage in the small-$\rho$ regime.

\begin{definition}[Arbitrage Rate of Return]
For a policy that trades participation between states
$(i, \overline{p})$ and $(j, p)$, define
\[ R_p(j) \coloneqq
   \frac{w_{i,\overline{p}} + w_{j,p}}{\Pr(\overline{p}\mid x_i) +
   \Pr(p\mid x_j)}, \]
where $\overline{p} = |1 - p|$ and
$w_{i,p} = \gamma_{x_i,p}\Pr(p\mid x_i)$.
\end{definition}

\begin{theorem}[The Masking Gap]
\label{thm: fair vs mask corrected}
Suppose $\rho$ is small enough that $D_{\text{una}}$ participates
only in the best stratum $x^* = x_i$. The performance gap
$\Delta = \mathcal{W}(D_{\text{mask}}) - \mathcal{W}(D_{\text{una}})$
satisfies
\[ \Delta \ge \rho \cdot \max\!\left(0,\;
   \max_{j \ne i,\, p \in \{0,1\}}
   \{ R_p(j) - w_i^{\text{avg}} \}\right). \]
\end{theorem}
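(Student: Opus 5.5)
The plan is to exhibit one explicit feasible point of the masking LP (Proposition~\ref{prop:mask} with $\varepsilon_{\text{mask}}=0$) whose reward is exactly $\rho R_p(j)$. We then compare it with the closed form of $\mathcal{W}(D_{\text{una}})$ given by Proposition~\ref{prop: opt unaware}. The $\max(0,\cdot)$ in the bound is immediate: the unaware constraint implies the masking constraint, so $D_{\text{una}}$ is itself feasible for the masking LP and hence $\Delta\ge 0$.

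\textbf{Unaware baseline.} By Proposition~\ref{prop: opt unaware}, $D_{\text{una}}$ sets $\alpha_{i,0}=\alpha_{i,1}=\rho/\Pr(x_i)$ and every other rate to zero. Its reward is
\[
\frac{\rho}{\Pr(x_i)}\bigl(\gamma_{i,0}\pi_{i,0}+\gamma_{i,1}\pi_{i,1}\bigr)=\rho\, w_i^{\text{avg}}.
\]

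\textbf{Swap-and-scale construction.} Fix $j\neq i$ and $p\in\{0,1\}$. Define a policy that participates only in $(i,\overline p)$ and $(j,p)$, with
\[
\alpha_{i,\overline p}=\frac{c}{\Pr(x_i)},\qquad \alpha_{j,p}=\frac{c}{\Pr(x_j)},\qquad c>0,
\]
and all other $\alpha$'s equal to zero.

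The ATE of this policy is $\pm\bigl(\Pr(x_j)\alpha_{j,p}-\Pr(x_i)\alpha_{i,\overline p}\bigr)=\pm(c-c)=0$. The sign depends on whether $p=1$ or $p=0$, but in either case the two stratum contributions have opposite signs because the groups are $p$ and $\overline p$. So the averaged constraint holds exactly.

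The participation rate is $c\bigl(\Pr(\overline p\mid x_i)+\Pr(p\mid x_j)\bigr)$. Setting it equal to $\rho$ fixes
\[
c=\frac{\rho}{\Pr(\overline p\mid x_i)+\Pr(p\mid x_j)}.
\]
The reward is then
\[
c\,\bigl(\gamma_{i,\overline p}\Pr(\overline p\mid x_i)+\gamma_{j,p}\Pr(p\mid x_j)\bigr)=c\,\bigl(w_{i,\overline p}+w_{j,p}\bigr)=\rho R_p(j).
\]

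\textbf{Conclusion.} Optimality of $D_{\text{mask}}$ gives $\mathcal{W}(D_{\text{mask}})\ge \rho R_p(j)$ for every admissible $(j,p)$. Subtracting $\rho\, w_i^{\text{avg}}$ and maximizing over $j\ne i$ and $p$ yields the stated bound.

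\textbf{Main obstacle: box constraints.} The step needing care is ensuring $\alpha_{i,\overline p},\alpha_{j,p}\le 1$, i.e.\ $c\le\min(\Pr(x_i),\Pr(x_j))$. This is the precise meaning of ``$\rho$ small enough'' required here. It is a strengthening of the hypothesis that $D_{\text{una}}$ uses only stratum $x_i$, which by itself only guarantees $\rho\le\Pr(x_i)$. I would therefore read the small-$\rho$ hypothesis as also covering
\[
\rho\le \min_{j}\min\bigl(\Pr(x_i),\Pr(x_j)\bigr)\cdot\bigl(\Pr(\overline p\mid x_i)+\Pr(p\mid x_j)\bigr),
\]
and state this explicitly. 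If this condition fails for some pair $(j,p)$, that pair can simply be dropped from the maximum, since the bound is only claimed through feasible constructions.
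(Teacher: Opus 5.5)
Your proposal is correct and is essentially the paper's proof: the paper builds the same two-cell policy on $(i,\overline p)$ and $(j,p)$ by first saturating one rate and then rescaling by $\rho/\rho'$. This gives exactly your policy with $c=\rho/\bigl(\Pr(\overline p\mid x_i)+\Pr(p\mid x_j)\bigr)$, and the paper's validity requirement $\rho\le\rho'$ is precisely your box-constraint condition. Your remark that this requirement is not implied by the stated hypothesis is right and matches the paper, which simply assumes $\rho\le\rho'$ in its Step C; the requirement is genuinely needed, since $R_p(j)$ does not depend on $\Pr(x_j)$ while the gain obtainable from stratum $j$ vanishes as $\Pr(x_j)\to 0$.
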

Applied to the motivating example in Table~\ref{tab: synth crime data},
this bound gives $\Delta \geq \rho/3$, which matches the actual
gap of $\nicefrac{1}{30}$ at $\rho = 0.1$.

\subsection{The Role of $X$}

A nonzero gap requires the optimizer to exploit either (i) demographic
disparity between strata or (ii) heterogeneous outcomes across
strata --- equivalently, $P \notindep X$ (confounding) or
$X \notindep Y \mid P$ (heterogeneity). Each is sufficient on its
own, by inspection of the arbitrage rate $R_p(j)$ relative to the
unaware return $w_i^{\text{avg}}$.

\textbf{Heterogeneity ($X \notindep Y \mid P$).} Even with no
confounding, $w_{x,p} = \gamma_{x,p}\Pr(P = p)$ varies across strata,
and $R_p(j)$ becomes a mix of reward terms that can outperform
$w_i^{\text{avg}}$.

\textbf{Confounding ($P \notindep X$).} Even with no heterogeneity
($\gamma_{x, p} = \gamma_p$), $R_p(j)$ and $w_i^{\text{avg}}$ are
weighted averages of $\gamma_p$ and $\gamma_{1-p}$ drawn from
different strata's propensities, so they do not generically agree.

The following theorems formalize that a gap exists if and only if
either of these dependencies holds.

\begin{theorem}[Necessary Condition] \label{thm: when gap}
If $P \indep X$ \emph{and} $X \indep Y \mid P$, then $\Delta = 0$.
\end{theorem}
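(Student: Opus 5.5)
The plan is to show directly that, under the two independence hypotheses, the masking LP of \Cref{prop:mask} (with $\varepsilon_{\text{mask}}=0$) and the unaware LP of \Cref{prop: opt unaware} (with $\varepsilon_{\text{una}}=0$) have the same optimal value. In fact, every feasible policy of either program will turn out to attain the same reward $\rho\,\E[Y]$. The key device is to aggregate the policy over strata, which collapses the masking LP to a two-variable problem with a unique feasible point.

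\textbf{Step 1 (factorize the data).} Write $q_x \coloneqq \Pr(x)$ and $r_p \coloneqq \Pr(p)$. The hypothesis $P \indep X$ gives $\pi_{x,p} = q_x r_p$. The hypothesis $X \indep Y \mid P$ gives $\gamma_{x,p} = \gamma_p \coloneqq \E[Y \mid p]$ for every $x$. Substituting these, the three ingredients of the masking LP become
\[ \mathcal{W}(D) = \sum_p r_p \gamma_p \sum_x q_x \alpha_{x,p}, \qquad \sum_p r_p \sum_x q_x \alpha_{x,p} = \rho, \qquad \sum_x q_x(\alpha_{x,1} - \alpha_{x,0}) = 0. \]
For the last identity I use that the ATE weight $\pi_{x,0}+\pi_{x,1}$ equals $q_x$.

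\textbf{Step 2 (aggregate).} Define $\beta_p \coloneqq \sum_x q_x \alpha_{x,p} \in [0,1]$, which is $\Pr(d_1 \mid p)$. The objective and both constraints depend on $\alpha$ only through $(\beta_0,\beta_1)$. The ATE constraint reads $\beta_1 = \beta_0$. The participation constraint then gives $(r_0 + r_1)\beta_0 = \rho$, so $\beta_0 = \beta_1 = \rho$. Hence every feasible masking policy has reward
\[ \mathcal{W}(D) = \rho\,(r_0\gamma_0 + r_1\gamma_1) = \rho\,\E[Y]. \]
In particular $\mathcal{W}(D_{\text{mask}}) = \rho\,\E[Y]$.

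\textbf{Step 3 (unaware side).} Under the hypotheses, $w_x^{\text{avg}} = \gamma_1 \Pr(p_1 \mid x) + \gamma_0 \Pr(p_0 \mid x) = r_1\gamma_1 + r_0\gamma_0 = \E[Y]$ for every $x$. So any feasible unaware policy also yields $\rho\,\E[Y]$. Alternatively, one can note that unaware-feasible policies are masking-feasible, so Step 2 applies to them verbatim. A feasible unaware policy exists since $\rho \in [0,1]$; for instance, take $\alpha_{x,p} \equiv \rho$. Therefore $\mathcal{W}(D_{\text{una}}) = \rho\,\E[Y]$ and $\Delta = 0$.

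There is no real obstacle here. The only point needing care is Step 2: one must check that the ATE weights coincide with the marginals $q_x$ used in the aggregation, which is exactly where $P \indep X$ is used. Without that independence, the ATE constraint would not force $\Pr(d_1 \mid p_1) = \Pr(d_1 \mid p_0)$, and the reward would not be pinned down.
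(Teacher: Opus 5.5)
Your proof is correct, but it takes a different route from the paper's. The paper first evaluates the arbitrage rate of Theorem~\ref{thm: fair vs mask corrected} and finds $R_p(j) = w_i^{\text{avg}}$, so the lower bound vanishes. By itself this only yields $\Delta \ge 0$, and only in the small-$\rho$ regime. The paper then asserts that, because $w_{x,p}$ is constant in $x$, the masking decisions $\alpha_{x,p}$ can be taken constant across strata. The ATE constraint then forces $\alpha_1=\alpha_0$, so the masking optimum is fair.

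Your aggregation $\beta_p=\sum_x q_x\alpha_{x,p}$ is exactly what makes that symmetry step rigorous: replacing each $\alpha_{x,p}$ by $\beta_p$ preserves reward, participation and ATE, and yields a fair policy. You also draw a sharper conclusion directly. The objective equals $\rho\,\E[Y]$ on the \emph{entire} masking-feasible set, for every $\rho\in[0,1]$, without invoking Theorem~\ref{thm: fair vs mask corrected}. This avoids the paper's overstatement that every optimal masking policy must be fair. Take $k=2$, $q_x=r_p=1/2$, $\rho=1/4$: the policy $\alpha_{1,1}=\alpha_{2,0}=1/2$ (others $0$) is masking-feasible, hence optimal by your Step 2, yet unfair. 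Only the optimal values coincide. In exchange, the paper's route ties the result to the swap intuition (no arbitrage pair beats $w_i^{\text{avg}}$) that drives Theorem~\ref{thm: sufficient for generic}.

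One small correction to your closing remark. The ATE weights $\pi_{x,0}+\pi_{x,1}$ equal $q_x$ unconditionally, so that is not where $P\indep X$ enters. Independence is used in Step 1 to factor $\pi_{x,p}=q_x r_p$ in the objective and participation constraint (i.e.\ $\Pr(x\mid p)=q_x$). That is what makes the same $\beta_p$ appear there as in the ATE constraint, as your last sentence in fact says.
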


\begin{theorem}[Generic Sufficient Condition]
\label{thm: sufficient for generic}
If either $P \notindep X$ or $X \notindep Y \mid P$, then on a set
of parameters $(\pi, \gamma)$ of full Lebesgue measure satisfying
that condition, $\Delta > 0$.
\end{theorem}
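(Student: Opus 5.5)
The plan is to view $\Delta$ as a function of the parameters $(\pi,\gamma)$ and show that $\Delta=0$ forces a nontrivial polynomial (or piecewise-polynomial) identity among them. The zero set of a nontrivial real-analytic function has Lebesgue measure zero, so this suffices. The key is an LP-duality characterization of when $D_{\text{mask}}$ fails to improve on $D_{\text{una}}$.

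\textbf{Step 1: a generic optimality condition.}
\begin{itemize}
\item Fix parameters and let $\alpha^{\text{una}}$ be the water-filling solution of \Cref{prop: opt unaware}.
\item $\alpha^{\text{una}}$ is feasible for the masking LP, since its ATE is $0$.
\item $\Delta=0$ iff $\alpha^{\text{una}}$ is also optimal for the masking LP, which has only two equality constraints (budget $\rho$ and ATE $=0$) plus box constraints.
\item By LP duality, optimality means there exist multipliers $\lambda$ (budget) and $\mu$ (ATE) such that every variable strictly inside $(0,1)$ has zero reduced cost:
\[ \pi_{x,p}\gamma_{x,p} - \lambda\pi_{x,p} - \mu(-1)^{1-p}(\pi_{x,0}+\pi_{x,1}) = 0. \]
\item Generically the water-filling solution has a fractional stratum $x_f$, where $\alpha_{x_f,0}=\alpha_{x_f,1}\in(0,1)$. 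This yields two equations, which determine $\lambda$ and $\mu$.
\item Every other variable at a bound must satisfy a sign condition on its reduced cost.
\end{itemize}
The fractional-stratum case is the generic one. The case where $\rho$ lands exactly on a stratum boundary is a measure-zero set, because it requires $\rho$ to equal a sum of $\pi$'s, and can be discarded.

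\textbf{Step 2: explicit formulas for $\lambda$ and $\mu$.} Solving the two equations at $x_f$ with $\Pr(p\mid x)=\pi_{x,p}/(\pi_{x,0}+\pi_{x,1})$ gives:
\begin{itemize}
\item $\mu$ is proportional to $\Pr(p_0\mid x_f)\Pr(p_1\mid x_f)(\gamma_{x_f,1}-\gamma_{x_f,0})$, which is nonzero generically;
\item $\lambda$ is a weighted combination of $\gamma_{x_f,0}$ and $\gamma_{x_f,1}$.
\end{itemize}

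\textbf{Step 3: a strict improvement in another stratum.} Consider any other stratum $x\neq x_f$ at the boundary, i.e.\ $\alpha_{x,0}=\alpha_{x,1}\in\{0,1\}$.
\begin{itemize}
\item Its two reduced costs must both have the sign dictated by the bound, say $\le 0$ for unused strata.
\item Their propensity-weighted sum equals $(\pi_{x,0}+\pi_{x,1})(w^{\text{avg}}_x-\lambda)$, which is automatically consistent with water-filling.
\item Their weighted difference, however, involves $\gamma_{x,1}-\gamma_{x,0}$ and $\Pr(p\mid x)$ in a way that differs from the corresponding quantity at $x_f$ under either hypothesis:
\begin{itemize}
\item under $X\notindep Y\mid P$, because $\gamma_{x,p}\neq\gamma_{x_f,p}$;
\item under $P\notindep X$, because $\Pr(p\mid x)\neq\Pr(p\mid x_f)$, using the identity $\pi_{x,p}=\Pr(p\mid x)(\pi_{x,0}+\pi_{x,1})$.
\end{itemize}
\item So at least one of the two reduced costs in stratum $x$ is strictly positive for unused strata (or strictly negative for saturated ones), except on a lower-dimensional algebraic set.
\end{itemize}
A strictly improving reduced cost rules out optimality of $\alpha^{\text{una}}$, so $\Delta>0$ off that set. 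Equivalently, in the small-$\rho$ regime this is the arbitrage of \Cref{thm: fair vs mask corrected}: one exhibits $j,p$ with $R_p(j)>w^{\text{avg}}_i$. For the generic case, though, the reduced-cost form handles all $\rho$.

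\textbf{Main obstacle: showing the exceptional set is measure zero.} The exceptional set is where, for every non-fractional stratum, both reduced costs have the ``wrong'' sign. This is a semialgebraic condition involving inequalities, not just equalities, so it need not have measure zero on its face. I will:
\begin{enumerate}
\item Show that a stratum cannot have both reduced costs $\le 0$ unless they are balanced.
\item Note that complementary slackness of the budget constraint forces their propensity-weighted sum to be $\le 0$.
\item Use the fact that $\mu\neq0$ and that the ATE coefficients $\pm(\pi_{x,0}+\pi_{x,1})$ enter the two reduced costs with opposite signs, whereas the reward terms enter with weights $\pi_{x,p}$.
\end{enumerate}
I then expect that both reduced costs can be nonpositive only when $\mu$ times an explicit difference of propensities or rewards vanishes, and that difference is nonzero under the stated dependence. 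If this inequality argument fails for some configurations, I would retreat to proving positivity only on an open dense full-measure subset. There I would perturb a single $\gamma_{x,p}$ while holding $w^{\text{avg}}_x$ fixed: this leaves $D_{\text{una}}$ unchanged but can only increase $\mathcal{W}(D_{\text{mask}})$. I would then argue via analyticity of the LP value on each basis region.
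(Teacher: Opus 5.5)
\textbf{Step~3 fails, and the statement as written is false.} Your duality setup is right, and sharper than the paper's swap bound. Writing $q_x=\Pr(p_1\mid x)$, you correctly get $\lambda=w^{\text{avg}}_{x_f}$ and $\mu=q_{x_f}(1-q_{x_f})(\gamma_{x_f,1}-\gamma_{x_f,0})$. You also correctly note that $\Delta=0$ exactly when every other stratum passes the reduced-cost sign test, since the dual is pinned down uniquely at $x_f$. The gap is Step~3, together with item~1 of your obstacle plan. Take an unused stratum $x$ and divide its two reduced costs by $\pi_{x,0}+\pi_{x,1}$. Both are $\le 0$ iff
\[ q_x(\gamma_{x,1}-\lambda)\;\le\;\mu\;\le\;(1-q_x)(\lambda-\gamma_{x,0}). \]
This is an interval of length $\lambda-w^{\text{avg}}_x>0$, positive precisely because water-filling left $x$ unused. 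So the ``no-improvement'' set is cut out by strict inequalities and has nonempty interior. Knowing that the weighted difference at $x$ differs from the one at $x_f$ gives you $\neq$, never the improving sign.

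\textbf{Counterexample.} Take $k=2$, $\rho=0.1$, and
\begin{itemize}
\item $\pi_{1,0}=\pi_{1,1}=0.25$, $\pi_{2,0}=0.3$, $\pi_{2,1}=0.2$;
\item $\gamma_{1,0}=0.4$, $\gamma_{1,1}=0.6$, $\gamma_{2,0}=0.05$, $\gamma_{2,1}=0.1$.
\end{itemize}
Here $q_1=0.5\neq q_2=0.4$ and $\gamma_{1,p}\neq\gamma_{2,p}$, so $P\notindep X$ and $X\notindep Y\mid P$ both hold. Now eliminate $\alpha_{1,0},\alpha_{1,1}$ using the budget and ATE equalities. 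The masking objective becomes $0.05-0.11\,\alpha_{2,0}-0.105\,\alpha_{2,1}$, so the unaware policy is masking-optimal and $\Delta=0$. Every defining condition is strict, so $\Delta=0$ on an open, positive-measure neighborhood of these parameters. No completion of your plan can prove the theorem.

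\textbf{Your fallback fails for the same reason.} Perturbing $\gamma_{x,\cdot}$ with $w^{\text{avg}}_x$ fixed only guarantees that $\mathcal{W}(D_{\text{mask}})$ does not decrease. In this example it stays constant on a whole neighborhood, so $\{\Delta>0\}$ is not even dense there.

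\textbf{The paper's proof has the same hole.} It shows only that $R_p(j)\neq w^{\text{avg}}_i$ generically, then asserts that a favorable swap $R_p(j)>w^{\text{avg}}_i$ exists almost surely. Here $R_0(2)=0.3$ and $R_1(2)\approx 0.27$, both below $w^{\text{avg}}_1=0.5$.

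\textbf{What your computation does establish.} In the fractional-stratum case it gives an exact criterion: $\Delta>0$ iff $\mu$ lies outside the interval above for some unused stratum, or outside the reversed interval for a saturated one. It follows that $\{\Delta>0\}$ has positive but not full measure, and a corrected statement should assert that instead.
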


\Cref{fig: when can you mask} illustrates these conditions across
the four canonical structures over $(P, X, Y, D)$: the three under
which masking generically improves on the unaware policy
($X{-}Y$ heterogeneity alone, $X{-}P$ confounding alone, or both),
and the single structure under which it does not (neither).

\begin{figure}[h!]
    \centering
    \scalebox{.64}{\begin{tikzpicture}[
    -latex, auto, 
    node distance = 1.5cm and 1.5cm, 
    on grid, 
    ultra thick,
    state/.style={circle, draw, minimum width=.5cm, ultra thick, fill=white}
]
    \begin{scope}[shift={(0,0)}]
        \node[state] (P-a) {$P$};
        \node[state] (X-a) [below = of P-a] {$X$};
        \node[state] (D-a) [left = of X-a] {$D$};
        \node[state] (Y-a) [right = of X-a] {$Y$};
        \draw[-] (X-a) -- (Y-a);
        \draw[-] (P-a) -- (Y-a);
        \path (P-a) edge (D-a);
        \path (X-a) edge (D-a);
    \end{scope}
    \begin{scope}[shift={(5cm,0)}]
        \node[state] (P-b) {$P$};
        \node[state] (X-b) [below = of P-b] {$X$};
        \node[state] (D-b) [left = of X-b] {$D$};
        \node[state] (Y-b) [right = of X-b] {$Y$};
        \draw[-] (X-b) -- (P-b); 
        \draw[-] (P-b) -- (Y-b);
        \path (P-b) edge (D-b);
        \path (X-b) edge (D-b);
    \end{scope}
    \begin{scope}[shift={(10cm,0)}]
        \node[state] (P-c) {$P$};
        \node[state] (X-c) [below = of P-c] {$X$};
        \node[state] (D-c) [left = of X-c] {$D$};
        \node[state] (Y-c) [right = of X-c] {$Y$};
        \draw[-] (X-c) -- (P-c);
        \draw[-] (X-c) -- (Y-c);
        \draw[-] (P-c) -- (Y-c);
        \path (P-c) edge (D-c);
        \path (X-c) edge (D-c);
    \end{scope}
    \begin{scope}[shift={(15cm,0)}]
        \node[state] (P-d) {$P$};
        \node[state] (X-d) [below = of P-d] {$X$};
        \node[state] (D-d) [left = of X-d] {$D$};
        \node[state] (Y-d) [right = of X-d] {$Y$};
        \draw[-] (P-d) -- (Y-d);
        \path (P-d) edge (D-d);
        \path (X-d) edge (D-d);
    \end{scope}
    \begin{scope}[on background layer]
        \coordinate (top-stretch-abc) at ([yshift=1.2cm]P-a.north);
        \coordinate (top-stretch-d)   at ([yshift=1.2cm]P-d.north);
        \node[
            fit=(P-a)(D-a)(Y-a)(P-b)(D-b)(Y-b)(P-c)(D-c)(Y-c)(top-stretch-abc), 
            rounded corners, 
            fill=red!10, 
            draw=red!70!black,
            line width=2pt,
            inner xsep=10pt,
            inner ysep=5pt 
        ] (red-box-large) {};
        \node[text=red!70!black, anchor=north, yshift=-5pt] at (red-box-large.north) 
            {\Large \textbf{$\mathcal{W}(D_{\text{mask}}) > \mathcal{W}(D_{\text{una}})$ (generically)}};
        \node[
            fit=(P-d)(D-d)(Y-d)(top-stretch-d), 
            rounded corners, 
            fill=blue!10, 
            draw=blue!60!black,
            line width=2pt,
            inner xsep=20pt,
            inner ysep=5pt
        ] (blue-box-small) {};
        \node[text=blue!60!black, anchor=north, yshift=-5pt] at (blue-box-small.north) 
            {\Large \textbf{$\mathcal{W}(D_{\text{mask}}) = \mathcal{W}(D_{\text{una}})$}};
    \end{scope}
\end{tikzpicture}}
    \caption{Four canonical structures over $(P, X, Y, D)$. Masking
    generically improves on the unaware policy whenever at least one
    of $X{-}Y$ heterogeneity or $X{-}P$ confounding is present
    (red box): the leftmost panel has $X{-}Y$ heterogeneity only,
    the second has $X{-}P$ confounding only, and the third has both.
    When neither is present (blue box), masking provides no gain.
    Undirected edges indicate flexibility in causal direction and
    the possibility of unobserved confounding; the directed edges
    $P \rightarrow D$ and $X \rightarrow D$ represent the policy's
    use of $P$ and $X$ as inputs.}
    \label{fig: when can you mask}
\end{figure}

\subsection{The Role of Regulation}

Detecting a nonzero ATE is a $z$-test whose sample complexity does
not depend on $k$, while detecting a masked policy requires a
stratum-wise test that splits the data into $k$ pieces and loses
power as $k$ grows (both tests are in Appendix~\ref{apx: hyp tests}).
Causally masked policies are therefore harder to detect than
unmasked ones. Appendix~\ref{apx: measure theory} formalizes this: optimized
solutions are almost surely stratum-wise unfair regardless of
regulation, and averaged-effect constraints push them into the
masked regime where the same unfairness is statistically harder
to detect.

The geometry of the feasible sets, analyzed in
Appendix~\ref{apx: quantifying feasibility sets}, makes this
concrete: $\varepsilon$-unaware feasible volumes scale as
$O(\varepsilon^k)$ while $\varepsilon$-masking volumes scale as
$O(\varepsilon)$, so the masking constraint admits dramatically
more flexibility under any finite-sample relaxation.

Appendix~\ref{apx: synthetic} performs a synthetic experiment on $100{,}000$ uniformly sampled worlds confirms that small relaxations of
$\varepsilon_{\text{mask}}$ yield substantially larger utility gains
than equivalent relaxations of $\varepsilon_{\text{una}}$, with the
gap widening in $k$.

\section{Real Data Demonstration} \label{sec: empirical}

We demonstrate the persistence of masked policies on the COMPAS
recidivism dataset \citep{angwin_machine_2022} by simulating a parole
policy that minimizes 2-year recidivism. We impose an averaged
constraint with respect to minority status and show that the
resulting policy remains stratum-wise unfair while evading detection for substantially longer than an
unconstrained policy.

\subsection{Setup}

The decision $D \in \{0, 1\}$ indicates parole release. We minimize
$\E[Y \mid d_1]\Pr(d_1)$, where $Y$ is 2-year recidivism, subject
to a minimum release-rate constraint
$\Pr(d_1) = \sum_{x, p}\pi_{x, p}\alpha(x, p) \ge \rho$, where
$\pi_{x, p} = \Pr(x, p)$ is estimated from the data and
$\rho = 0.1$. Strata $X$ are derived from age category, prior
records, charge degree, and sex of the defendant, discretized via
quantile-based binning at varying resolutions $k$.

We evaluate each policy by testing two nested null hypotheses:
\begin{description}[leftmargin=0pt, itemsep=0.5ex]
    \item[$H_{\text{ATE}}$ (averaged):]
    $\sum_{x} \Pr(X = x)(\alpha_{x, 1} - \alpha_{x, 0}) = 0$, the
    averaged regulatory check.
    \item[$H_{\text{CATE}}$ (stratified):]
    $\alpha_{x, 1} - \alpha_{x, 0} = 0$ for all $x$, the strict
    requirement; $H_{\text{CATE}} \Rightarrow H_{\text{ATE}}$.
\end{description}
Rejecting $H_{\text{CATE}}$ while accepting $H_{\text{ATE}}$ is the
signature of a successful mask. We use a $z$-test for $H_{\text{ATE}}$ and a Holm-corrected family of stratum-wise Fisher's exact tests for $H_{\text{CATE}}$
(Appendix~\ref{apx: hyp tests}).

\subsection{Results}

\begin{figure*}[h]
    \centering
    \includegraphics[width=0.95\linewidth]{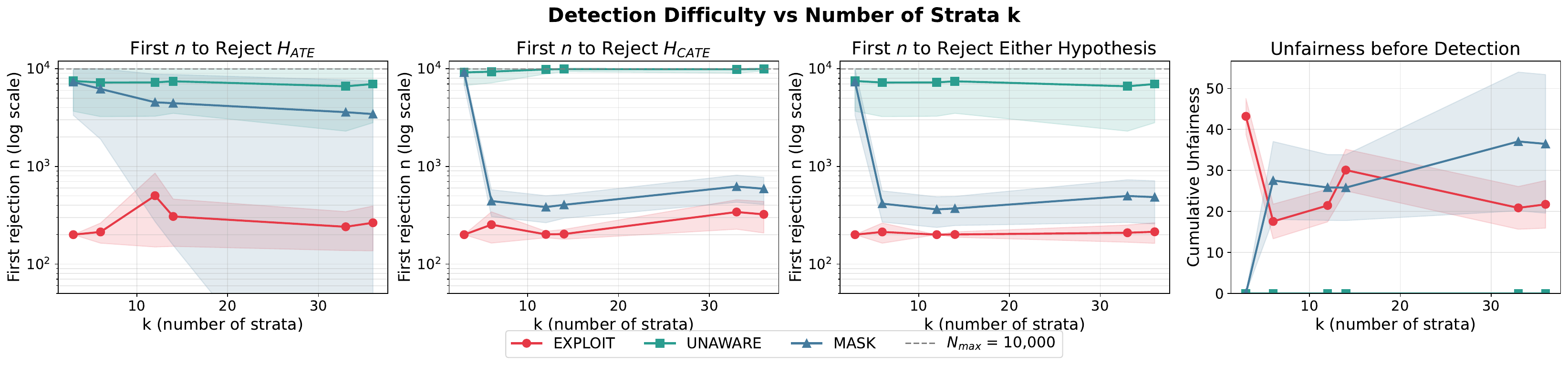}
    \caption{Sample size required to reject each null hypothesis
    for the three strategies (mean $\pm$ standard error). The
    masked strategy rejects $H_{\text{ATE}}$ at sample sizes
    comparable to the unaware strategy, while the exploitative
    strategy is rejected almost immediately. Even under the combined
    test (third panel), the masked strategy persists over twice as long as the exploitative one, and the gap widens
    with $k$. Cumulative unfairness (rightmost panel) is highest
    for the masked strategy because its concealment outlasts its
    detection.}
    \label{fig:Real data}
\end{figure*}

\Cref{fig:Real data} reports the sample size at which each test
first rejects, simulating the longevity of each strategy. The
unaware strategy is rejected last (a chance event under the null),
and the exploitative strategy is rejected first. The masked strategy
rejects $H_{\text{ATE}}$ at sample sizes indistinguishable from the
unaware strategy, despite being strictly unfair within strata.
Under the combined test, the masked strategy persists more than
twice as long as the exploitative one, with the gap widening
for larger $k$ as the stratum-wise test loses power. The rightmost
panel translates this longevity into cumulative unfairness,
$\sum_x |\alpha_{x, 1} - \alpha_{x, 0}|$ accumulated until rejection:
the masked policy contributes the most total unfairness, because
its concealment outlasts its detection.

\section{Discussion}\label{sec:discussion}

This paper introduces the causal masking problem, formulates it as
an LP, and examines its properties. We showed that averaged-effect
regulation almost always drives optimizers toward stratum-wise unfair
solutions, and that detecting such unfairness from observed decisions
is information-theoretically difficult. As a result, masked unfair
policies can remain undetected for long periods, with the detection
gap growing in $k$. More complex observed covariates widen the gap
between unmasked and masked unfair policies while also making masking
more profitable. If we restrict ourselves to decision-level
statistics, causal masking is likely and difficult to detect or regulate.

These limitations are grounded in the statistical difficulty of
conditional independence testing and therefore extend beyond causal
fairness to any setting where regulation is enforced through
conditional parity on aggregated outputs. The implications cover
fairness applications such as college admissions and criminal
justice (as discussed in this paper), affirmative action
\citep{holzer_assessing_2000}, and the gender wage gap
\citep{pendakur_glass_2010} --- all of which are predominantly
analyzed through conditional (adjusted) averages. Beyond fairness,
the same structure constrains regulation of insider trading,
classified-information leakage, and any other domain where dependence
on a prohibited variable is permitted only through an allowable
mediator. In all of these settings, our results argue that effective
regulation must operate at the model level, where the decision
mechanism itself is accessible, rather than only at the level of its
outputs.

The masking advantage is not recovered by simpler
randomization-based defenses from the information leakage literature
\citep{alvim_information_2022}. Mixing between exploit and unaware
policies traces out a Pareto frontier in the utility-detectability
plane; masking sits strictly above it, achieving higher utility than
any mixing rate at the same detection longevity
(Appendix~\ref{apx:mixed_comparison}). The masking critique is a
structural property of how averaged constraints interact with
reward-driven optimization, not something randomization can address.

\subsection{Limitations}

Our results are derived in a deliberately simplified setting: a
discrete stratum variable $X$, a binary prohibited variable $P$, a
binary decision $D$, randomized policies, and a fixed participation
rate. Several aspects of this simplification merit explicit comment.

\paragraph{Discreteness and randomization.}
A natural objection is that masking depends on stratum-level
randomization, which a regulator could rule out by requiring
identical cases to be treated identically. This defense fails on
practical grounds: in any realistic setting, no two cases are
identical along all observable features and
an optimizer with high-dimensional covariates can simulate
randomization deterministically by hashing low-importance features.
The discreteness and randomization assumptions in our model are
modeling conveniences, not real constraints on the masking critique.

\textbf{Continuous and nonlinear extensions.} Our theory and our
hypothesis tests assume discrete strata. Real-world settings often
involve continuous covariates and nonlinear outcome relationships.
On balance, these complications make the regulator's problem harder
rather than easier: continuous covariates expand the dimensions
along which masked policies can offset CATEs, and nonlinearity in
the outcome amplifies the heterogeneity that masking exploits. The
heterogeneity tests cited in \Cref{subsec: related works} provide a starting point for extending our diagnostic framework to continuous settings.

\paragraph{Hidden confounding does not exacerbate masking.}
The masking advantage is driven by \emph{observed} dependence
between $P$ and $X$ --- it is precisely because the optimizer can
act differentially within observed strata that masking is profitable.
Unobserved confounders contribute variation that neither side can
condition on and so cannot be recruited into a masking strategy. Our
experiments on $k$ confirm this: a coarser $X$ pushes fine-grained
substructure into the unobserved residual, and the masking gap
shrinks accordingly.

\subsection{Future Directions}

Several extensions follow naturally from this work.

\paragraph{Success-rate maximization.}
An optimizer without a fixed participation rate must instead
maximize a ratio (graduation rate, win rate, expected GPA), turning
the LP into a linear-fractional program with different geometry.
The same setting also covers strategic problems like card-counting,
where the optimizer's choice of \emph{when} to act is itself a
decision variable.

\paragraph{Beyond averaged constraints.}
Our masking constraint targets the ATE, but other measures of
global dependence --- odds ratios detected via the
Cochran-Mantel-Haenszel test \citep{zhang_generalized_1997},
conditional mutual information, or $f$-divergence-based measures
--- impose distinct constraint geometries. Extending the masking
framework to these quantities would clarify which forms of summary
regulation are robust to optimizer adaptation and which are merely
substitutes for the ATE-based regulation we study here.

\bibliography{biblio}


\appendix

\section{Deferred Proofs}\label{apx: deferred proofs}
\subsection{Proof of Theorem~\ref{thm: fair vs mask corrected}} \label{apx: prove perf gap}
\begin{proof}
The proof proceeds by comparing the utility of the optimal fair policy to a specially constructed, valid masking policy candidate, $D''_{p,j}$.

\textbf{1. Utility of the Optimal Fair Policy:}
Under the small $\rho$ assumption, the optimal fair policy is to participate only in stratum $i$ with rates $\alpha_{i,1} = \alpha_{i,0} = \rho / \Pr(X=i)$. The utility is:
\[ \mathcal{W}(D_{\text{fair}}) = \rho \cdot w_i^{\text{avg}}. \]

\textbf{2. Construct a Valid Masking Candidate Policy, $D''_{p,j}$:}
We build a candidate policy from scratch that is inspired by an arbitrage between states $(i,1)$ and $(j,0)$ (the argument for $(i,0)$ and $(j,1)$ is symmetric).

\textit{Step A: Define the Unscaled Arbitrage Policy ($D'$).}
Let's construct an intermediate policy $D'$ that has non-zero participation only for rates $\alpha'_{i,1}$ and $\alpha'_{j,0}$. To satisfy the zero ATE constraint, these rates must obey the balance equation: $\Pr(X=i)\alpha'_{i,1} = \Pr(X=j)\alpha'_{j,0}$. To make this policy as extreme as possible while keeping all rates $\le 1$, we set one rate to 1 and solve for the other. This results in the policy:
\[ \alpha'_{i,1} = \frac{\min(\Pr(X=i), \Pr(X=j))}{\Pr(X=i)} \quad \text{and} \quad \alpha'_{j,0} = \frac{\min(\Pr(X=i), \Pr(X=j))}{\Pr(X=j)}. \]
For example, if $\Pr(X=i)$ is larger, then $\alpha'_{i,1} < 1$ and $\alpha'_{j,0} = 1$.

\textit{Step B: Calculate the Rate of Return of $D'$.}
The utility of this unscaled policy is:
\begin{align*}
    \mathcal{W}(D') &= \alpha'_{i,1}\gamma_{i,1}\pi_{i,1} + \alpha'_{j,0}\gamma_{j,0}\pi_{j,0} \\
    &= \frac{\min(\dots)}{\Pr(x_i)}\gamma_{i,1}\Pr(P=1|x_i)\Pr(x_i) + \frac{\min(\dots)}{\Pr(x_j)}\gamma_{j,0}\Pr(P=0|x_j)\Pr(x_j) \\
    &= \min(\Pr(X=i), \Pr(X=j)) \cdot [w_{i,1} + w_{j,0}].
\end{align*}
The participation rate of this policy is:
\begin{align*}
    \rho' &= \alpha'_{i,1}\pi_{i,1} + \alpha'_{j,0}\pi_{j,0} \\
    &= \frac{\min(\dots)}{\Pr(X=i)}\Pr(P=1|x_i)\Pr(x_i) + \frac{\min(\dots)}{\Pr(X=j)}\Pr(P=0|x_j)\Pr(x_j) \\
    &= \min(\Pr(X=i), \Pr(x_j)) \cdot [\Pr(P=1|i) + \Pr(P=0|j)].
\end{align*}
The key insight is that the rate of return, $\mathcal{W}(D')/\rho'$, is symmetric and simplifies because the ``min'' term cancels out:
\[ \frac{\mathcal{W}(D')}{\rho'} = \frac{w_{i,1} + w_{j,0}}{\Pr(P=1|i) + \Pr(P=0|j)} = R_0(j). \]

\textit{Step C: Scale to Create Valid Candidate $D''$.}
We now define our final candidate policy $D''_{0,j}$ by scaling all rates of $D'$ by a factor of $(\rho / \rho')$. This policy is guaranteed to be valid if we assume $\rho$ is sufficiently small (specifically, $\rho \le \rho'$). By construction, its participation rate is exactly $\rho$, and its utility is:
\[ \mathcal{W}(D''_{0,j}) = \rho \cdot R_0(j). \]

\textbf{3. Bounding the Gap:}
The optimal masking policy's utility, $\mathcal{W}(D_{\text{mask}})$, must be at least as high as the utility of the best of these constructed candidates. The performance gap $\Delta$ is therefore bounded by:
\begin{align*}
\Delta &= \mathcal{W}(D_{\text{mask}}) - \mathcal{W}(D_{\text{fair}}) \\
&\ge \max_{j \neq i, p \in \{0,1\}} \left\{ \mathcal{W}(D''_{p,j}) \right\} - \mathcal{W}(D_{\text{fair}}) \\
&= \rho \cdot \max_{j \neq i, p \in \{0,1\}} \left\{ R_p(j) \right\} - \rho \cdot w_i^{\text{avg}}.
\end{align*}
Factoring out $\rho$ and ensuring the gap is non-negative completes the proof.
\end{proof}

\subsection{Proof of Theorem~\ref{thm: when gap}}\label{apx: when gap proof}
\begin{proof}
The two assumptions translate directly to our model parameters:
\begin{enumerate}
    \item $P \indep X \implies \Pr(P=p|X=x) = \Pr(P=p)$ for all strata $x$.
    \item $X \indep Y \mid P \implies \mathbb{E}[Y|X=x, P=p] = \mathbb{E}[Y|P=p]$. In our notation, this means $\gamma_{x,p}$ does not depend on $x$, so we write it as $\gamma_p$.
\end{enumerate}
Under these assumptions, the context-weighted reward is $w_{x,p} = \gamma_p \Pr(P=p)$, which is constant for all strata $x$. Consequently, the average reward $w_x^{\text{avg}} = w_{x,0} + w_{x,1} = \gamma_0\Pr(P=0) + \gamma_1\Pr(P=1)$ is also constant for all $x$. The fair policy is indifferent to which stratum it chooses. Let's pick $i$.

Now, let's analyze the arbitrage rate of return from \Cref{thm: fair vs mask corrected}:
\begin{align*}
    R_p(j) &= \frac{w_{i,\overline{p}} + w_{j,p}}{\Pr(P=\overline{p}|x_i) + \Pr(P=p|x_j)} \\
    &= \frac{\gamma_{\overline{p}}\Pr(P=\overline{p}) + \gamma_p\Pr(P=p)}{\Pr(P=\overline{p}) + \Pr(P=p)} \\
    &= \gamma_{\overline{p}}\Pr(P=\overline{p}) + \gamma_p\Pr(P=p) = w_i^{\text{avg}}.
\end{align*}
Since $R_p(j) = w_i^{\text{avg}}$ for all possible swaps, the lower bound on the gap from \Cref{thm: fair vs mask corrected} is $\rho \cdot \max(0, 0) = 0$.

To show the gap is exactly zero, we note that any valid masking policy must satisfy the zero ATE constraint. Under our assumptions, the decisions $\alpha_{x,p}$ for a given $p$ will be the same across all $x$ (since $w_{x,p}$ is constant). Let this policy be $(\alpha_0, \alpha_1)$. The ATE constraint becomes $(\alpha_1 - \alpha_0) \sum_x \Pr(X=x) = \alpha_1 - \alpha_0 = 0$. This forces $\alpha_1 = \alpha_0$, meaning any optimal masking policy must also be a fair policy. Thus, $\mathcal{W}(D_{\text{mask}}) = \mathcal{W}(D_{\text{fair}})$ and $\Delta=0$.
\end{proof}

\subsection{Proof of Theorem \ref{thm: sufficient for generic}} \label{apx: generic gap}
\begin{proof}
The proof relies on showing that if either dependency holds, the "no-gap" condition ($\Delta=0$) imposes a non-trivial equality constraint on the problem parameters, which defines a set of Lebesgue measure zero. We use the bound from \Cref{thm: fair vs mask corrected}, which guarantees a positive gap if $R_p(j) > w_i^{\text{avg}}$ for some swap $(j,p)$, where $i=\arg\max_x w_x^{\text{avg}}$.

\textbf{Case 1: Confounding exists ($P \notindep X$).}
Assume for simplicity $X \indep Y \mid P$ (so $\gamma_{x,p} = \gamma_p$). The condition $P \notindep X$ means that for some $j \neq i$, $\Pr(P|j) \neq \Pr(P|i)$. The arbitrage return $R_p(j)$ is a weighted average of $\gamma_p$ and $\gamma_{\overline{p}}$, while $w_i^{\text{avg}}$ is a different weighted average of the same values. These two averages will be equal only if $\gamma_p = \gamma_{\overline{p}}$ or if the propensity scores perfectly cancel out the differences in the objective function. For generic, continuous parameters $(\pi, \gamma)$, the strict inequality $R_p(j) \neq w_i^{\text{avg}}$ will hold. There will almost surely be a swap for which the inequality is favorable ($R_p(j) > w_i^{\text{avg}}$), creating a positive gap. The condition $R_p(j) = w_i^{\text{avg}}$ defines a measure-zero set.

\textbf{Case 2: Heterogeneous effects exist ($X \notindep Y \mid P$).}
Assume for simplicity $P \indep X$ (so $\Pr(P|x) = \Pr(P)$). The condition $X \notindep Y \mid P$ means that for some $p$, $\gamma_{x,p}$ is not constant across strata. The arbitrage return $R_p(j)$ involves terms like $\gamma_{j,p}$ and $\gamma_{i,\overline{p}}$, while the fair return $w_i^{\text{avg}}$ involves $\gamma_{i,p}$ and $\gamma_{i,\overline{p}}$. For a generic choice of reward parameters, it would be a measure-zero coincidence for the specific combination of terms in $R_p(j)$ to exactly equal the combination in $w_i^{\text{avg}}$. Therefore, a profitable swap will almost surely exist, guaranteeing $\Delta > 0$.

Since a strictly positive gap is generated (generically) if either condition holds, a zero-gap outcome represents a "perfect alignment" of parameters that is broken by any small, random perturbation.
\end{proof}

\section{Genericity of Constraints and Solutions} \label{apx: measure theory}

\subsection{Genericity of Constraints}

A policy is a vector $\vec{\alpha} = (\alpha_{x_1,0}, \alpha_{x_1,1}, \dots, \alpha_{x_k,0}, \alpha_{x_k,1})$ in the policy space $\mathcal{P} = [0, 1]^{2k}$. We first show that an arbitrarily chosen policy is almost never fair or masked.

\begin{theorem}
The set of policies $\vec{\alpha} \in [0, 1]^{2k}$ that satisfy (a) the fairness constraint or (b) the masking constraint are sets of Lebesgue measure zero in $\mathcal{P}$.
\end{theorem}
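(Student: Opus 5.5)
The plan is to show that each constraint set is contained in a proper affine subspace of $\mathbb{R}^{2k}$ intersected with the cube $[0,1]^{2k}$. Any proper affine subspace has Lebesgue measure zero in $\mathbb{R}^{2k}$, so both sets are null. We work with the strict versions $\varepsilon_{\text{una}} = \varepsilon_{\text{mask}} = 0$, because the statement concerns exact satisfaction of the constraints.

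\textbf{(a) The fairness constraint.} The unaware constraint of Proposition~\ref{prop: opt unaware} with $\varepsilon_{\text{una}}=0$ requires $\alpha_{x,1}-\alpha_{x,0}=0$ for every $x=1,\dots,k$. For the measure-zero claim one equation suffices. The set $\{\vec\alpha : \alpha_{x_1,1}=\alpha_{x_1,0}\}$ is the zero set of the nonzero linear functional $\ell_1(\vec\alpha)=\alpha_{x_1,1}-\alpha_{x_1,0}$, so it is a hyperplane. The fair set lies inside this hyperplane, and in fact it is the intersection of $k$ independent hyperplanes, a subspace of dimension $k<2k$. To make the null-measure fact concrete, I will use Fubini. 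For each fixed value of the other $2k-1$ coordinates, the slice in the coordinate $\alpha_{x_1,1}$ is the single point $\alpha_{x_1,0}$, which has one-dimensional measure zero. Integrating these slices over the remaining coordinates gives total measure zero.

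\textbf{(b) The masking constraint.} The masking constraint of Proposition~\ref{prop:mask} with $\varepsilon_{\text{mask}}=0$ is $\sum_x \Pr(x)(\alpha_{x,1}-\alpha_{x,0})=0$. This is the zero set of the linear functional with coefficient vector $c$, where $c_{x,1}=\Pr(x)$ and $c_{x,0}=-\Pr(x)$. The step that needs care is verifying that $c\neq 0$, since a hyperplane only has measure zero when the functional is nontrivial. This holds because $\sum_x \Pr(x)=1$, so some $\Pr(x)>0$. With that in hand, I apply the same Fubini argument, slicing along a coordinate $\alpha_{x,1}$ with $\Pr(x)>0$. On each slice, that coordinate is uniquely determined by the others, so the slice is a single point.

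\textbf{Conclusion.} The union of the two sets is a union of two null sets and is therefore null. This also covers the strict fair set being contained in the masked set, which the paper notes right after Proposition~\ref{prop:mask}.

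I expect no real obstacle. The only points needing attention are the nondegeneracy of the masking functional and the interpretation that the constraints are exact equalities. If $\varepsilon>0$, the constraint sets have positive volume, and Appendix~\ref{apx: quantifying feasibility sets} quantifies that volume instead. I will state this caveat briefly.
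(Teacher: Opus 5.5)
Your proposal is correct and follows the paper's own argument: the fair set is cut out by $k$ independent linear equations, giving a $k$-dimensional subspace of $\mathbb{R}^{2k}$, and the masking set lies in a single hyperplane that is nontrivial because the $\Pr(X=x)$ sum to one; both are therefore Lebesgue-null. Your Fubini slicing only makes explicit the null-measure fact for lower-dimensional affine sets, which the paper asserts without proof.
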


\begin{proof}
The space of all possible policies $\mathcal{P} = [0, 1]^{2k}$ is a $2k$-dimensional hypercube with volume 1.

\textbf{(a) Fairness Constraint:} The fairness constraint is $A \indep U \mid X$, which corresponds to the set of $k$ linear equality constraints:
\[ \alpha_{x,1} - \alpha_{x,0} = 0 \quad \text{for each } x \in \{x_1, \dots, x_k\}. \]
These $k$ constraints are independent, as each involves a disjoint pair of coordinates of $\vec{\alpha}$. Geometrically, the set of policies satisfying these constraints is the intersection of $k$ distinct hyperplanes with the hypercube $\mathcal{P}$. This intersection forms a $k$-dimensional manifold within the $2k$-dimensional space $\mathcal{P}$. For any $k \ge 1$, we have $k < 2k$. A $k$-dimensional manifold has zero Lebesgue measure in a $2k$-dimensional space. Thus, the set of fair policies has measure zero.

\textbf{(b) Masking Constraint:} The masking constraint corresponds to a single linear equality:
\[ \sum_{x=1}^k \Pr(X=x) (\alpha_{x,1} - \alpha_{x,0}) = 0, \]
where $\Pr(X=x) = \pi_{x,0} + \pi_{x,1}$. Assuming not all $\Pr(X=x)$ are zero, this is a single non-trivial linear constraint on the coordinates of $\vec{\alpha}$. Geometrically, this equation defines a $(2k-1)$-dimensional hyperplane in $\mathbb{R}^{2k}$. The set of policies satisfying the constraint is the intersection of this hyperplane with the hypercube $\mathcal{P}$, which is a $(2k-1)$-dimensional manifold. Since $2k-1 < 2k$, this set has Lebesgue measure zero in $\mathcal{P}$.
\end{proof}

\subsection{Genericity of Optimal Solutions}

We now consider the "world parameters" $(\pi, \gamma)$ which define the problem instance. Let the space of all such parameters be $\Theta$, a subset of a high-dimensional Euclidean space with a well-defined Lebesgue measure.

\begin{theorem}
The set of world parameters $(\pi, \gamma) \in \Theta$ for which the optimal exploit policy also satisfies (a) the fairness constraint or (b) the masking constraint is a set of Lebesgue measure zero.
\end{theorem}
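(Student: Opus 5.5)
We use the explicit form of the exploit optimum from Proposition~\ref{prop: opt exploit}. On the open, full-measure set of parameters where all values $\gamma_{x,p}$ are distinct and all $\pi_{x,p}>0$, the greedy order is strict. The optimal exploit policy is then unique and has a rigid structure: there is a set $S$ of cells with $\alpha_{x,p}=1$, at most one fractional cell $(x^\dagger,p^\dagger)$ with $\alpha = (\rho - \sum_{S}\pi_{x,p})/\pi_{x^\dagger,p^\dagger}$, and $\alpha=0$ elsewhere. Ties $\gamma_{x,p}=\gamma_{x',p'}$ lie on finitely many hyperplanes in $\Theta$, so they have measure zero and can be discarded. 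Fix one of the finitely many possible greedy orderings. On the open cone of parameters inducing that ordering, the cutoff data $(S,x^\dagger,p^\dagger)$ is a function of $\pi$ alone, and it is constant on finitely many regions with measure-zero boundaries. We restrict to one such piece and show that each constraint cuts out a measure-zero subset of it.

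\textbf{(b) Masking.} On a fixed piece, the ATE of the exploit policy is a function of $\pi$ only:
\[ f(\pi)=\sum_x(\pi_{x,0}+\pi_{x,1})(\alpha_{x,1}-\alpha_{x,0}). \]
Here each $\alpha_{x,p}$ is either constant or the rational function given above. So $f$ is a rational, hence real-analytic, function of $\pi$ on the piece. The zero set of a real-analytic function that is not identically zero has Lebesgue measure zero, so it suffices to show $f\not\equiv 0$. We do this by perturbing a single mass. If some cell $(x,p)\in S$ has its partner $(x,\bar p)\notin S$, increase $\pi_{x,\bar p}$ slightly while renormalizing the fractional cell. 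This changes $\Pr(x)$ and therefore $f$ at a nonzero rate. Otherwise $S$ is a union of full strata, which contribute zero CATE, and only the fractional cell contributes. Then $f=\Pr(x^\dagger)\,(\pm\alpha_{x^\dagger,p^\dagger})$ up to the partner term, and this is nonzero whenever $0<\alpha_{x^\dagger,p^\dagger}<1$. We handle edge pieces where $\rho$ exactly fills whole strata by noting they are themselves defined by a polynomial equation in $\pi$ and hence are null.

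\textbf{(a) Fairness.} This is even more restrictive, since $\alpha_{x,1}=\alpha_{x,0}$ for all $x$ implies the ATE equals zero. The fair set is therefore contained in the masking-null set from (b), and one may simply invoke (b). Alternatively, one can argue directly: fairness forces the fractional cell's partner to carry the same fractional value, which is impossible when at most one cell is fractional, unless $S$ consists of whole strata with $\rho=\sum_{x\in S'}\Pr(x)$. That condition is again a single polynomial equation and so has measure zero.

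Finally, $\Theta$ is the union of finitely many pieces together with null boundaries, so the exceptional set is a finite union of null sets and is therefore null. \textbf{Main obstacle.} The delicate step is proving $f\not\equiv 0$ on every piece. In particular, a clean case analysis is needed showing the generic exploit policy always has an unmatched cell within some stratum. The greedy ordering could in principle select complete strata exactly, and we must confirm that this happens only on the null set $\{\rho=\sum_{x\in S'}\Pr(x)\}$. We would first try to settle this with the one-cell perturbation argument above, and we would treat $\rho$ as either fixed or included among the parameters as convenient.
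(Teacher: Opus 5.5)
Your argument is correct in outline, and it takes a genuinely different route from the paper's.

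\textbf{The paper's route.} The paper takes the exploit optimum to be the pure one-cell policy $\alpha_{x^*,u^*}=1$ at $\arg\max_{x,u}\gamma_{x,u}\pi_{x,u}$. It reads off a fairness violation at $x^*$ and an ATE of $\pm\Pr(X=x^*)\neq 0$, and it sends every other optimum to a tie set. That reduction does not account for the participation constraint in Eq.~\eqref{eq: base lp}. Under that constraint the optimum is the greedy-in-$\gamma$ policy of Proposition~\ref{prop: opt exploit}. Its generic form is a block $S$ of full cells plus one fractional cell, so a non-pure optimum is the typical case, not a tie.

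\textbf{Your route.} Your decomposition by greedy ordering and cutoff is what the budgeted LP actually requires. With it, your direct argument for (a) is complete. On every open piece exactly one coordinate lies in $(0,1)$ while its partner is $0$ or $1$. The complement of the open pieces lies in finitely many hyperplanes on which a prefix of the greedy order has mass exactly $\rho$. Your reduction of (a) to (b) by containment is also valid. The paper's route buys brevity; yours buys a proof for the problem as actually formulated.

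\textbf{First point to tighten: the non-vanishing step in (b).} As written it is loose. On the simplex you cannot raise $\pi_{x,\bar p}$ alone, and the compensating mass can make the fractional term offset the change in $\Pr(x)$. For example, take the mass from the fractional cell when that cell's partner is empty. The rate of change of $f$ is then $\pm 1 \pm \alpha_{x^\dagger,p^\dagger}\pi_{x^\dagger,\bar p^\dagger}/\pi_{x^\dagger,p^\dagger}$, which vanishes at some points of the piece when the two signs are opposite.

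A clean fix follows.
\begin{itemize}
\item Suppose some stratum $x\neq x^\dagger$ has one cell in $S$ and the other empty. Move mass from the full cell to the empty one. This keeps $\Pr(x)$, $\Pr(x^\dagger)$ and $\pi_{x^\dagger,p^\dagger}$ fixed and lowers $\sum_S\pi_{x,p}$. Along such lines $f$ is therefore affine with slope $\pm\Pr(x^\dagger)/\pi_{x^\dagger,p^\dagger}\neq 0$. Each piece is convex, so each line meets it in an interval, and $f$ has at most one zero there. Fubini then gives a null zero set, with no analyticity needed.
\item If no such stratum exists, $f$ equals $\pm\Pr(x^\dagger)\alpha_{x^\dagger,p^\dagger}$ when the fractional cell's partner is empty. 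It equals $\mp\Pr(x^\dagger)(1-\alpha_{x^\dagger,p^\dagger})$ when that partner is in $S$. In both cases $f$ is nonzero pointwise.
\end{itemize}

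\textbf{Second point: the range of $\rho$.} For fixed $\rho$ you need $0<\rho<1$. At $\rho=1$ the only feasible policy is $\alpha\equiv 1$, which is fair for every parameter. Correspondingly, your edge equation with $S'$ equal to all strata is then identically true, so the edge pieces are not null.
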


\begin{proof}
Let $w'_{x,u} \coloneqq \gamma_{x,u}\pi_{x,u}$. The objective is $\mathcal{W}(D) = \sum_{x,u} w'_{x,u} \alpha_{x,u}$. For a generic choice of parameters, the values of $w'_{x,u}$ will be unique. In this case, the optimal exploit policy is a pure strategy: set $\alpha_{x^*, u^*} = 1$ for $(x^*, u^*) = \arg\max_{x,u} w'_{x,u}$, and all other $\alpha_{x,u} = 0$.

\textbf{(a) Fairness Constraint:} The pure exploit policy sets one coordinate to 1 and the rest to 0. For this to satisfy the fairness constraints $\alpha_{x,1} = \alpha_{x,0}$ for all $x$, every coordinate must be 0 (if all $w'_{x,u} \le 0$) or 1. A non-trivial pure exploit policy necessarily violates the constraint at stratum $x^*$, where it sets $(\alpha_{x^*,1}, \alpha_{x^*,0})$ to either $(1,0)$ or $(0,1)$.

A non-pure exploit policy is optimal only if there is a tie for the maximum value of $w'_{x,u}$. For an optimal exploit solution to be fair, it must be that a fair policy is also an optimal exploit solution. This requires a specific alignment of parameters. For instance, if the optimal fair policy is $\alpha_{x',1}=\alpha_{x',0}=1$ for some $x'$, then for this to also be an optimal exploit solution, we need $w'_{x',1} + w'_{x',0} \ge w'_{x,u}$ for all $(x,u)$ and for any other fair policy. A necessary condition for such a policy to be expressible as an optimal exploit solution is a tie, such as $w'_{x',1} = w'_{x',0}$ being the two highest values. This equality, $\gamma_{x',1}\pi_{x',1} = \gamma_{x',0}\pi_{x',0}$, is a single non-trivial equality constraint on the continuous parameters $(\pi, \gamma)$ and thus defines a set of Lebesgue measure zero in $\Theta$.

\textbf{(b) Masking Constraint:} Plugging the pure exploit policy $\alpha_{x^*,u^*}=1$ into the masking constraint gives $\Pr(X=x^*)(\delta_{u^*,1} - \delta_{u^*,0})=0$, which implies $\Pr(X=x^*)=0$, a trivial case. Thus, a pure exploit policy cannot satisfy the masking constraint.

The constraint can only be satisfied if the optimal exploit policy is non-pure, which requires a tie, e.g., $w'_{x_A, u_A} = w'_{x_B, u_B} = \max_{x,u} w'_{x,u}$. This is an equality constraint on $(\pi, \gamma)$, e.g., $\gamma_{x_A, u_A}\pi_{x_A, u_A} = \gamma_{x_B, u_B}\pi_{x_B, u_B}$, which defines a measure-zero set.
\end{proof}

\begin{theorem}
The set of world parameters $(\pi, \gamma) \in \Theta$ for which the optimal masking policy is also a fair policy is a set of Lebesgue measure zero.
\end{theorem}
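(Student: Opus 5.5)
The plan is to reduce the statement to the positivity of the masking gap $\Delta$ and then use the generic sufficient condition of Theorem~\ref{thm: sufficient for generic}, together with a tie-breaking genericity argument for the LP itself. First I fix the parameter space $\Theta$ of $(\pi,\gamma)$ with its Lebesgue measure. I then split $\Theta$ into the set $\Theta_0$ where $P \indep X$ and $X \indep Y \mid P$ hold simultaneously, and its complement. The set $\Theta_0$ is cut out by nontrivial polynomial equalities: $\pi_{x,p} = \Pr(x)\Pr(p)$ and $\gamma_{x,p} = \gamma_p$ for all $x$. For $k \ge 2$ this is a proper algebraic subvariety, so it has measure zero. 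It therefore suffices to work on $\Theta \setminus \Theta_0$.

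On $\Theta\setminus\Theta_0$, Theorem~\ref{thm: sufficient for generic} gives $\Delta>0$ off a further null set $N_1$. The key step is to show that $\Delta>0$ forces every optimal masking policy to be unfair. Suppose some optimal masking policy $\vec\alpha^\star$ satisfied the stratified constraint $\alpha_{x,1}=\alpha_{x,0}$ for all $x$. Then $\vec\alpha^\star$ would be feasible for the unaware LP of Proposition~\ref{prop: opt unaware}. Hence $\mathcal{W}(D_{\text{una}}) \ge \mathcal{W}(\vec\alpha^\star) = \mathcal{W}(D_{\text{mask}})$. The reverse inequality always holds, since the unaware feasible set is contained in the masking feasible set, so $\Delta = 0$, a contradiction. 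Thus on $\Theta\setminus(\Theta_0\cup N_1)$ no optimal masking policy is fair. The set where the optimal masking policy is fair is therefore contained in $\Theta_0 \cup N_1$, which is null.

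To make the argument robust to the optimizer selecting among multiple optima, I will also note that, generically, the masking LP has a unique optimal vertex. Non-uniqueness requires the objective vector $(\gamma_{x,p}\pi_{x,p})$ to be orthogonal to an edge direction of the feasible polytope. The polytope's constraint data depend only on $\pi$ and $\rho$, and there are finitely many edges. For each fixed $\pi$, the orthogonality condition is a nontrivial linear equation in $\gamma$, so Fubini gives a null set. The contradiction argument above already covers every optimum, so this uniqueness step is a convenience rather than a necessity.

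The main obstacle is the rigor of Theorem~\ref{thm: sufficient for generic}, whose proof handles each dependency case under a simplifying assumption on the other. I will rely on it as stated. If a self-contained argument is needed, I would instead show directly that $\{\Delta = 0\}\setminus\Theta_0$ is null. On each region where the unaware optimum is the single best stratum $x_i$, equality $\Delta=0$ forces $R_p(j) \le w_i^{\text{avg}}$ for all $j,p$. It also forces the stronger first-order condition that no feasible ATE-preserving perturbation of the unaware vertex improves $\mathcal{W}$. With the LP dual of the masking program at the unaware vertex, this means the reduced costs of the two coordinates $\alpha_{i,1}$ and $\alpha_{i,0}$ must coincide at a single ATE multiplier and budget multiplier. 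That coincidence is an extra polynomial equality in $(\pi,\gamma)$ that fails off $\Theta_0$. Verifying that this equality is nontrivial, uniformly over the finitely many active-set patterns, is the step I expect to require the most care.
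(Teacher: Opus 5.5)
Your reduction is correct, and cleaner than the paper's double-threshold argument. Some optimal masking policy is fair if and only if $\Delta = 0$: one direction is your argument, and conversely, if $\Delta=0$ the unaware optimum is itself a masking optimum. So the statement is equivalent to $\Delta>0$ for almost every $(\pi,\gamma)$. The gap is that this target is false, so the step where you invoke Theorem~\ref{thm: sufficient for generic} cannot be made to work.

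Here is a counterexample. Take $k=2$, $\rho=0.1$, $(\pi_{1,0},\pi_{1,1},\pi_{2,0},\pi_{2,1})=(0.2,0.3,0.3,0.2)$ and $(\gamma_{1,0},\gamma_{1,1},\gamma_{2,0},\gamma_{2,1})=(0.9,0.95,0.1,0.2)$. Both $P\notindep X$ and $X\notindep Y\mid P$ hold here. Stratum $1$ is best, and the unaware optimum is $\alpha_{1,0}=\alpha_{1,1}=\rho/\Pr(x_1)=0.2$, zero elsewhere. Write $q_x=\Pr(X=x)$, $s_1=1$, $s_0=-1$, and set
$\mu=w_1^{\text{avg}}=0.93$, $\lambda=\Pr(p_1\mid x_1)\Pr(p_0\mid x_1)(\gamma_{1,1}-\gamma_{1,0})=0.012$, and $r_{x,p}=\pi_{x,p}(\gamma_{x,p}-\mu)-\lambda q_x s_p$.
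Then $r_{1,0}=r_{1,1}=0$, $r_{2,1}=-0.152$ and $r_{2,0}=-0.243$. Every masking-feasible $\alpha$ satisfies $\mathcal{W}(\alpha)=\mu\rho+\lambda\cdot\mathrm{ATE}(\alpha)+\sum_{x,p}r_{x,p}\alpha_{x,p}\le\mu\rho$. Equality requires $\alpha_{2,0}=\alpha_{2,1}=0$, and the two equality constraints then force the fair point. All these quantities are continuous in $(\pi,\gamma)$ and the inequalities are strict. Hence on an open neighborhood of this parameter, a set of positive measure, the \emph{unique} optimal masking policy is fair and $\Delta=0$. Put simply: when one stratum dominates the others in both $P$-groups, there is nothing to arbitrage.

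Both of your routes break on the same sign issue. The proof of Theorem~\ref{thm: sufficient for generic} only shows $R_p(j)\neq w_i^{\text{avg}}$ generically, never the favorable sign; here $R_0(2)=0.525$ and $R_1(2)=0.55$, both below $0.93$. In your self-contained fallback, requiring the reduced costs of $\alpha_{i,1}$ and $\alpha_{i,0}$ to vanish at a common pair of multipliers is not an extra equality on $(\pi,\gamma)$. It is two linear equations in the two free unknowns $(\mu,\lambda)$, with determinant $-q_i^2\neq 0$, and the formulas above always solve it. It would force $\gamma_{i,1}=\gamma_{i,0}$ only if $\lambda$ were pinned to $0$, but the ATE multiplier is exactly the extra freedom the masking constraint supplies. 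What remains are the inequalities $\Pr(p_1\mid x)(\gamma_{x,1}-w_i^{\text{avg}})\le\lambda\le\Pr(p_0\mid x)(w_i^{\text{avg}}-\gamma_{x,0})$ for all $x\neq i$. When $x_i$ is the unique best stratum and $0<\rho<\Pr(x_i)$, these characterize $\Delta=0$ exactly, and they cut out a set with nonempty interior. The paper's own proof fails at the same point: it shows only that the boundaries between regimes are null, then asserts without argument that the ``profitable unfair'' side holds generically. The statement must therefore be weakened, e.g.\ to that exact characterization, which combined with your reduction describes precisely when optimal masking policies are fair.
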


\begin{proof}
The fair policies are a strict subset of the feasible set for the masking problem. Therefore, $\mathcal{W}(A_{\text{mask}}) \ge \mathcal{W}(A_{\text{fair}})$. The solutions are identical if and only if the maximum over the larger set happens to lie in the smaller subset.

The optimal masking policy is a double-threshold strategy: $\alpha_{x,1} = \I(w_{x,1} > \lambda)$ and $\alpha_{x,0} = \I(w_{x,0} > -\lambda)$, where $w_{x,u} \coloneqq \gamma_{x,u}\Pr(u|x)$ and $\lambda$ is a Lagrange multiplier chosen to satisfy the constraint (we ignore measure-zero cases where the decision is fractional).

For this policy to be fair, it must be that $\alpha_{x,1} = \alpha_{x,0}$ for all $x$. This means that for the optimal $\lambda^*$, every stratum $x$ must have a policy of either $(1,1)$ or $(0,0)$. No stratum can have an "unfair" policy of $(1,0)$ or $(0,1)$. A policy is $(1,0)$ if $w_{x,1} > \lambda^*$ and $w_{x,0} \le -\lambda^*$. A policy is $(0,1)$ if $w_{x,1} \le \lambda^*$ and $w_{x,0} > -\lambda^*$.

The masking LP is free to select such "unfair" policies for individual strata if doing so increases the global objective, provided the choices can be balanced to satisfy the zero ATE constraint. For instance, suppose stratum $x_A$ has a large positive $w_{x_A,1}$ and stratum $x_B$ has a large positive $w_{x_B,0}$. The masking LP can potentially set $\alpha_{x_A,1}=1$ and $\alpha_{x_B,0}=1$ (and others zero), yielding a high objective value. This policy is unfair. It is feasible for the masking problem if $\Pr(X=x_A)(1-0) + \Pr(X=x_B)(0-1) = 0$, i.e., if $\Pr(X=x_A) = \Pr(X=x_B)$.

The optimal masking solution is fair only if, for the given parameters $(\pi, \gamma)$, no such profitable "unfair" combination of choices exists. The condition where an unfair policy (like the one above) gives an objective value *equal to* the optimal fair policy's objective value is an equality constraint on the parameters $(\pi, \gamma)$. For instance:
\[ \gamma_{x_A,1}\pi_{x_A,1} + \gamma_{x_B,0}\pi_{x_B,0} = \max_{x} \left( \gamma_{x,1}\pi_{x,1} + \gamma_{x,0}\pi_{x,0} \right) \]
Such equations define the boundaries between regions in the parameter space $\Theta$ where different types of policies are optimal. These boundaries are lower-dimensional manifolds and thus have Lebesgue measure zero. For a generic choice of parameters, one side of the inequality will hold, and it is overwhelmingly likely that there exists some profitable, unfair trade-off. The masking solution will thus almost never be a fair one.
\end{proof}

\section{The Geometry of Fair and Masked Policies} \label{apx: quantifying feasibility sets}
Let the world of interest be defined by a policy vector $\vec{\alpha} \in [0,1]^{2k}$, with constraints carving out a feasible region within this hypercube. The size of this region represents the ``freedom'' the optimizer has to find a good solution. When constraints are perfectly enforced ($\varepsilon=0$), the feasible sets become lower-dimensional affine subspaces. Their dimension quantifies their size.

\begin{theorem}[Dimensionality of Feasible Sets] \label{thm: dim of feasible sets}
Let $V_{\text{fair}}$ and $V_{\text{mask}}$ be the feasible sets for the perfectly fair and masking problems ($\varepsilon_{\text{fair}} = \varepsilon_{\text{mask}}=0$), respectively.
\begin{enumerate}
    \item \textbf{Containment:} $V_{\text{fair}} \subseteq V_{\text{mask}}$.
    \item \textbf{Dimensionality:} The dimension of $V_{\text{fair}}$ is $k-1$, while the dimension of $V_{\text{mask}}$ is $2k-2$.
    \item \textbf{Growth of Freedom:} The masking problem has $k-1$ more dimensions of freedom than the fair problem.
\end{enumerate}
\end{theorem}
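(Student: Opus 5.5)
The plan is to identify both feasible sets as intersections of the cube $[0,1]^{2k}$ with affine subspaces cut out by linear equalities. The fair set consists of the $k$ stratum-wise equalities $\alpha_{x,1}=\alpha_{x,0}$ together with the participation constraint $\sum_{x,p}\pi_{x,p}\alpha_{x,p}=\rho$ from Eq.~\eqref{eq: base lp}. The masking set consists of the single ATE equality $\sum_x \Pr(X=x)(\alpha_{x,1}-\alpha_{x,0})=0$ together with the same participation constraint.

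\emph{Containment.} This follows immediately: if every stratum difference vanishes, the weighted sum of differences vanishes, and the participation constraint is shared. This is the observation already made after Proposition~\ref{prop:mask}.

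\emph{Dimension.} I would compute the rank of each linear system and subtract from $2k$. For the fair system, the $k$ equalities act on disjoint coordinate pairs, so their normal vectors $e_{x,1}-e_{x,0}$ are linearly independent. The participation normal $(\pi_{x,p})$ is not in their span, because each of those vectors has coordinate sum zero while the participation normal has positive coordinate sum $1$. So the rank is $k+1$, and the affine dimension is $2k-(k+1)=k-1$.

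For the masking system, the ATE normal has coordinate sum zero while the participation normal has coordinate sum $1$, so they are independent as long as not all $\Pr(X=x)$ vanish. The rank is $2$ and the dimension is $2k-2$. The growth statement is then the arithmetic $(2k-2)-(k-1)=k-1$.

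\emph{Main obstacle.} The delicate step is ensuring that the dimension of the feasible \emph{set} equals the dimension of the affine subspace, that is, that the subspace meets the open cube $(0,1)^{2k}$. I would exhibit the constant policy $\alpha_{x,p}\equiv\rho$ for $0<\rho<1$. It is fair, hence also masking-feasible, it satisfies participation since $\sum\pi_{x,p}=1$, and it lies in the interior of the cube. A neighbourhood of it inside each affine subspace therefore stays in the cube, so each set has full affine dimension.

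I would state this nondegeneracy assumption ($0<\rho<1$, positive stratum masses) explicitly. At $\rho\in\{0,1\}$ both sets collapse to a single point, and the claimed dimensions fail. If the authors intended to omit the participation constraint, the same argument gives dimensions $k$ and $2k-1$, and the gap $k-1$ is unchanged; I would remark on this.
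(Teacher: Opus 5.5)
Your proposal is correct and follows the same route as the paper: containment holds because vanishing stratum differences force the weighted sum to vanish, and each dimension is $2k$ minus the number of independent linear equalities ($k+1$ for the fair set, $2$ for the masking set, counting the participation constraint in both). Your coordinate-sum argument for independence and your interior point $\alpha_{x,p}\equiv\rho$ with $0<\rho<1$ supply two steps the paper only asserts, namely that the constraints are independent and that the affine subspaces meet the open cube; your caveat about $\rho\in\{0,1\}$ is therefore a legitimate refinement, not a departure.
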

\begin{proof}
(1) Any policy satisfying the $k$ local fairness constraints, $\alpha_{x,1}-\alpha_{x,0}=0$, also satisfies the global masking constraint, $\sum_x \Pr(X=x)(\alpha_{x,1}-\alpha_{x,0})=0$.
(2) The policy space is $2k$-dimensional. The fair problem imposes $k$ local fairness constraints and 1 global participation constraint, for a total of $k+1$ independent linear equality constraints. The resulting dimension is $2k - (k+1) = k-1$. The masking problem imposes 1 global ATE constraint and 1 global participation constraint, for 2 total constraints. The dimension is $2k-2$.
(3) The difference is $(2k-2) - (k-1) = k-1$.
\end{proof}

This result shows that the masking policy's advantage, in terms of the dimensionality of its search space, grows linearly with the number of strata, $k$. When $\varepsilon > 0$, the constraints define ``slabs'' instead of thin hyperplanes, and the feasible sets become full-dimensional bodies. We can compare their volumes.

\begin{proposition}[Volume of Feasible Sets]
Let $\text{Vol}(V(\varepsilon))$ be the volume of the feasible set for a given $\varepsilon>0$. For a small $\varepsilon$, the volumes scale as:
\begin{enumerate}
    \item $\text{Vol}(V_{\text{fair}}(\varepsilon)) \propto \varepsilon^k$
    \item $\text{Vol}(V_{\text{mask}}(\varepsilon)) \propto \varepsilon^1$
\end{enumerate}
\end{proposition}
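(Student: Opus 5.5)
Since the feasible sets are polytopes cut out of the cube by slabs, the plan is to measure each slab's volume by a linear change of coordinates that turns each constraint functional into a coordinate. The participation constraint is either kept as an equality in both problems or relaxed identically in both, so it contributes the same factor to each volume and cancels in the comparison. For the $\varepsilon$-dependence it therefore suffices to compare $\{\vec\alpha\in[0,1]^{2k} : |\alpha_{x,1}-\alpha_{x,0}|\le\varepsilon\ \forall x\}$ with $\{\vec\alpha\in[0,1]^{2k} : |\sum_x \Pr(x)(\alpha_{x,1}-\alpha_{x,0})|\le\varepsilon\}$, and then intersect with the participation constraint at the end.

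\textbf{Stratified set.} Here the constraint factorizes over strata. For each $x$, the set $\{(a,b)\in[0,1]^2 : |a-b|\le\varepsilon\}$ has area $1-(1-\varepsilon)^2 = 2\varepsilon-\varepsilon^2$. The product over the $k$ strata therefore has volume $(2\varepsilon-\varepsilon^2)^k = (2\varepsilon)^k(1+O(\varepsilon))$, which is the claim $\propto\varepsilon^k$.

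\textbf{Averaged set.} Change variables to $s_x = \alpha_{x,1}+\alpha_{x,0}$ and $\delta_x = \alpha_{x,1}-\alpha_{x,0}$; the Jacobian is a constant. The constraint becomes $|\ell(\vec\delta)|\le\varepsilon$ with $\ell(\vec\delta)=\sum_x \Pr(x)\delta_x$. By the coarea formula (Fubini along the unit normal $v = \nabla\ell/\|\nabla\ell\|$),
\[
\mathrm{Vol} = \int_{-\varepsilon}^{\varepsilon} \frac{A(t)}{\|\nabla\ell\|}\,dt,
\]
where $A(t)$ is the $(2k-1)$-dimensional area of the section $\{\ell = t\}$ of the cube, and $\|\nabla\ell\|$ is computed in the original coordinates.

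The step that needs real care is showing that $A(t)$ is continuous at $t=0$ and that $A(0) > 0$. Continuity holds because sections of a convex body vary continuously in $t$ wherever the hyperplane meets the interior. Positivity holds because the hyperplane $\ell=0$ passes through the interior point $\alpha\equiv 1/2$, so its section is a full-dimensional relatively open convex set. Together these give $\mathrm{Vol} = 2\varepsilon A(0)/\|\nabla\ell\| + o(\varepsilon)$, i.e.\ $\propto\varepsilon$.

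\textbf{Main obstacle.} The difficulty is reinstating the participation equality without destroying these scalings. I would handle it by computing volume within the affine hyperplane $H_\rho = \{\sum_{x,p}\pi_{x,p}\alpha_{x,p}=\rho\}$, intersected with the cube. For $0<\rho<1$, the uniform policy $\alpha\equiv\rho$ lies in the relative interior of $H_\rho\cap[0,1]^{2k}$ and satisfies every CATE constraint exactly. Two consequences follow:

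\begin{itemize}
\item In the averaged case, the coarea argument goes through inside $H_\rho$, provided $\ell$ is not constant on $H_\rho$. This holds because $\nabla\ell$ is not parallel to $\pi$, since $\ell$ has coefficients of both signs.
\item In the stratified case, near the compliant point the $k$ functionals $\delta_x$ are linearly independent when restricted to $H_\rho$. Their restricted gradients are independent because $\pi$ cannot lie in the span of the antisymmetric vectors $e_{x,1}-e_{x,0}$, as $\pi$ has positive entries. The slab intersection is then locally a product of an $\varepsilon$-box in $k$ directions with a positive-volume $(k-1)$-dimensional section of the compliant set $V_{\text{fair}}$ (whose dimension is $k-1$ by Theorem~\ref{thm: dim of feasible sets}). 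Its volume is therefore $c\,\varepsilon^k(1+o(1))$ with $c>0$.
\end{itemize}
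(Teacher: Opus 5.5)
Your proposal is correct and follows the same idea as the paper's sketch: $k$ independent $\varepsilon$-slabs for the stratified constraint versus a single slab for the averaged one. You also supply the details the sketch leaves implicit: an interior compliant point ($\alpha\equiv 1/2$, or $\alpha\equiv\rho$ inside $H_\rho$) so the zero-width section has positive lower-dimensional measure, continuity of sections, and independence of the $\delta_x$ restricted to the participation hyperplane.

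One caution: your opening remark that the participation constraint ``contributes the same factor to each volume and cancels'' is not literally true. Intersecting with $H_\rho$ is not multiplicative, and an $\varepsilon$-width participation slab would raise both exponents by one. The remark is harmless, though, because your later treatment of the equality case inside $H_\rho$ does not rely on it.
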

\begin{proof}[Proof (Sketch)]
The $\varepsilon$-fair problem constrains the policy vector $\vec{\alpha}$ to lie within $k$ independent slabs, each of width proportional to $\varepsilon$. The volume of their intersection thus scales with $\varepsilon^k$. The $\varepsilon$-masking problem imposes only a single slab constraint, so its volume scales linearly with $\varepsilon$.
\end{proof}
The ratio of the feasible volumes, $\text{Vol}(V_{\text{mask}}) / \text{Vol}(V_{\text{fair}}) \propto 1/\varepsilon^{k-1}$, is therefore immense. This disparity in ``freedom to operate'' provides a clear geometric reason for the masking policy's ability to find significantly better solutions.

\section{Synthetic Experiments} \label{apx: synthetic}

This appendix provides the full synthetic-data experiment summarized
in Section~\ref{sec: theory}. We sample $100{,}000$ worlds uniformly
with $\vec\pi$ on the simplex, $\vec\gamma \in [0, 1]^{2k}$, and
$\rho = 0.25$, and compute $D_{\text{exploit}}$,
$D_{\text{una}}(\varepsilon_{\text{una}})$, and
$D_{\text{mask}}(\varepsilon_{\text{mask}})$ as the LP solutions of
Propositions~\ref{prop: opt exploit}, \ref{prop: opt unaware},
and~\ref{prop:mask}. We normalize each policy's utility so that
$\mathcal{W}(D_{\text{una}}(0)) = 0$ and
$\mathcal{W}(D_{\text{exploit}}) = 1$, and repeat the experiment for
$k = 2$ (top row of \Cref{fig: six heat plots}) and $k = 10$ (bottom
row).

\begin{figure*}[h]
    \centering
    \includegraphics[width=\linewidth]{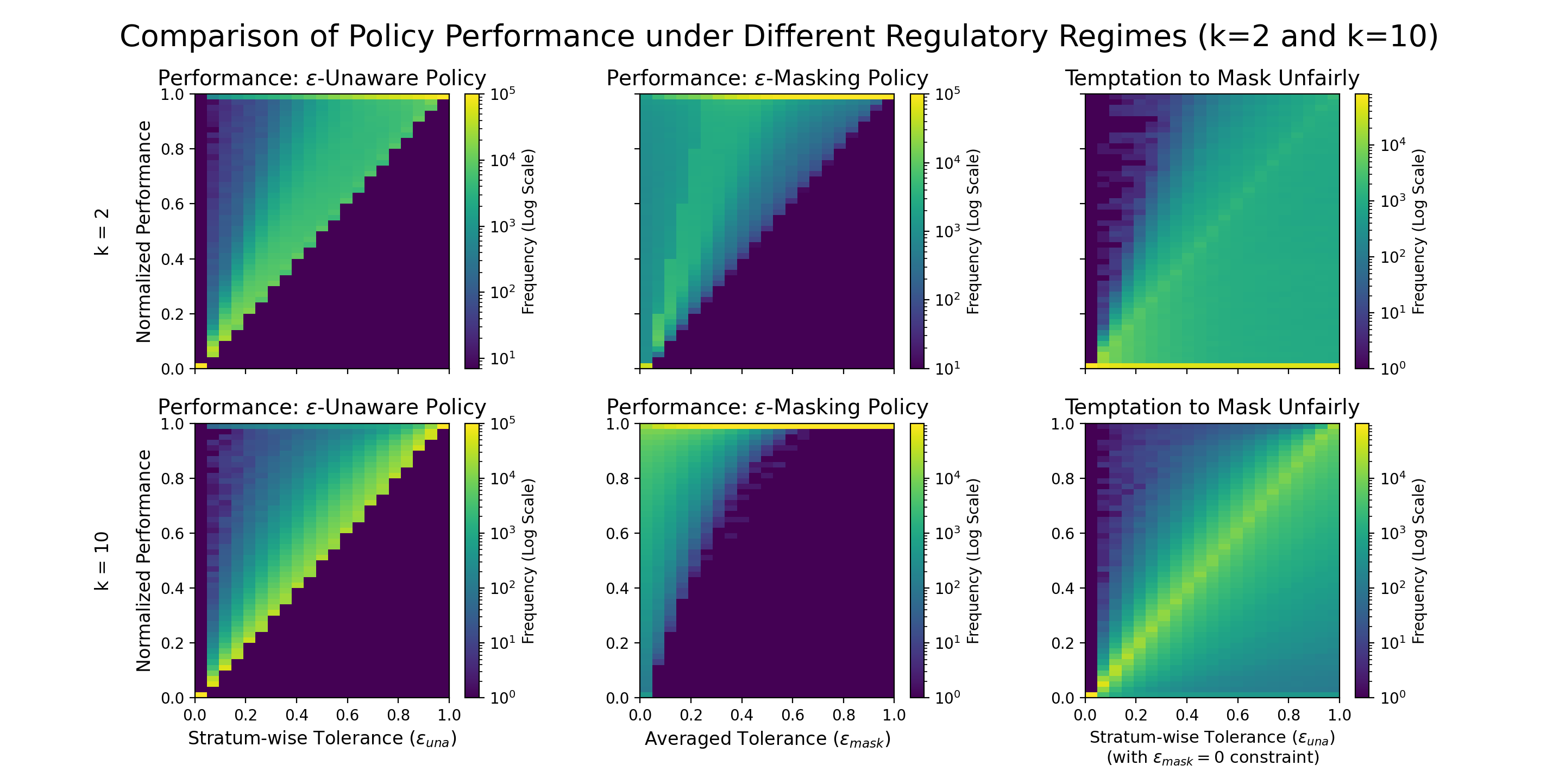}
    \caption{Synthetic experiments comparing the utility achieved
    under three constraint configurations, for $k = 2$ (top) and
    $k = 10$ (bottom). The first column varies
    $\varepsilon_{\text{una}}$ alone; the second varies
    $\varepsilon_{\text{mask}}$ alone; the third fixes
    $\varepsilon_{\text{mask}} = 0$ and varies $\varepsilon_{\text{una}}$.}
    \label{fig: six heat plots}
\end{figure*}

The first column shows utility as $\varepsilon_{\text{una}}$ varies
with no averaged constraint imposed: relaxing the stratum-wise
constraint yields a moderate, gradual improvement. The second column
shows utility as $\varepsilon_{\text{mask}}$ varies with no
stratum-wise constraint: relaxing the averaged constraint yields
much larger gains for the same nominal tolerance, and the disparity
widens with $k$. The third column fixes $\varepsilon_{\text{mask}} = 0$
and varies $\varepsilon_{\text{una}}$, isolating the question of how
much unfairness an optimizer can recover \emph{within} strictly
masked policies; the gain is approximately linear in
$\varepsilon_{\text{una}}$. Together, these results show that an
averaged-only regulation leaves substantial room for utility
improvements that would be statistically undetectable under the same
regulatory test.

\section{Hypothesis Tests}\label{apx: hyp tests}

\subsection{Testing for Zero ATE}
In the setting where $P$ is binary, we employ a Z-test to test the
null hypothesis $H_{\text{ATE}}: \text{ATE} = 0$.
The test statistic is
\[
\widehat{\text{ATE}} = \sum_{i = 1}^k \hat{p}_{x_i} \, \hat{\tau}_{x_i}
\]
where
\[
\hat{p}_x = \frac{1}{n}\sum_{i=1}^n \mathbf{1}_{\{X_i = x\}}, \qquad
x \in \{0, 1\},
\]
and
\[
\hat{\tau}_x = \frac{\sum_{i: X_i = x,\; P_i = 1} D_i}{\sum_{i: X_i = x,\; P_i = 1} 1}
             - \frac{\sum_{i: X_i = x,\; P_i = 0} D_i}{\sum_{i: X_i = x,\; P_i = 0} 1}.
\]
The Z-statistic is then $Z = \widehat{\text{ATE}} \big/
\widehat{\text{SE}}(\widehat{\text{ATE}})$, which is asymptotically
standard normal under the null hypothesis $H_{\text{ATE}}$. We use
a significance level of $5\%$.

\subsection{Testing for Conditional Independence}
A more nuanced way to detect unfairness is to directly test the
null hypothesis of conditional independence,
$H_{\text{CATE}}: D \perp P \mid X$. For each stratum $x \in \nu$,
where $\nu$ denotes the set of strata with observations in both
the $P = 1$ and $P = 0$ groups, we apply Fisher's exact test
\citep{mcdonald2014handbook} for independence between $D$ and $P$
within the stratum, obtaining a $p$-value $p_x$.

Aggregating these per-stratum tests into a joint test for
$H_{\text{CATE}}$ is a multiple-hypothesis testing problem. A
combined-$p$-value approach such as Fisher's method
($S = -2 \sum_{x \in \nu} \log p_x \sim \chi^2_{2|\nu|}$ under the
null) is inappropriate here: the masking constraint mechanically
induces negative dependence between per-stratum CATEs --- a
positive CATE in one stratum must be offset by a negative CATE
elsewhere to satisfy the averaged constraint --- so the
independence assumption underlying Fisher's combination fails on
precisely the alternatives we wish to detect.

We instead control the family-wise error rate using Holm's
step-down procedure \citep{holm_simple_1979}, which is valid under
arbitrary dependence among the per-stratum $p$-values. Sorting
them as $p_{(1)} \le p_{(2)} \le \cdots \le p_{(|\nu|)}$, we reject
$H_{\text{CATE}}$ at level $\alpha = 0.05$ whenever there exists
$i$ such that $p_{(i)} \le \alpha / (|\nu| - i + 1)$.

\section{Comparison with Mixed Strategies from Information Leakage Games}
\label{apx:mixed_comparison}

Information leakage games \citep{alvim_information_2022} formalize a
defender who randomizes among policies to limit the information an
adversary can recover. A natural question, given the structural
parallel between leakage games and our setting, is whether such a
randomization strategy --- mixing between an exploit policy and an
unaware policy on a per-decision basis --- could achieve the same
effect as causal masking, without the explicit construction of
offsetting CATEs across strata.

We test this on synthetic data with $K = 3$ strata and fixed
participation rate $\rho = 0.3$, drawing $100$ random worlds with
$\pi$ sampled from a uniform Dirichlet and $\gamma \in [0, 1]^{2K}$.
For each world we compute the optimal exploit, unaware, and mask
policies as defined in \Cref{sec: LP formulation}, plus a
\emph{mixed} strategy that, for each decision, samples from the
exploit policy with probability $\nicefrac{1}{3}$ and from the
unaware policy otherwise. We then simulate decisions across a
growing sample size $N$ and track three quantities: the
Holm-adjusted $p$-value for $H_{\text{CATE}}$, a stratified-effect
$p$-value (averaging CATEs uniformly across strata), and the
normalized cumulative utility.

\begin{figure}[h]
    \centering
    \includegraphics[width=\linewidth]{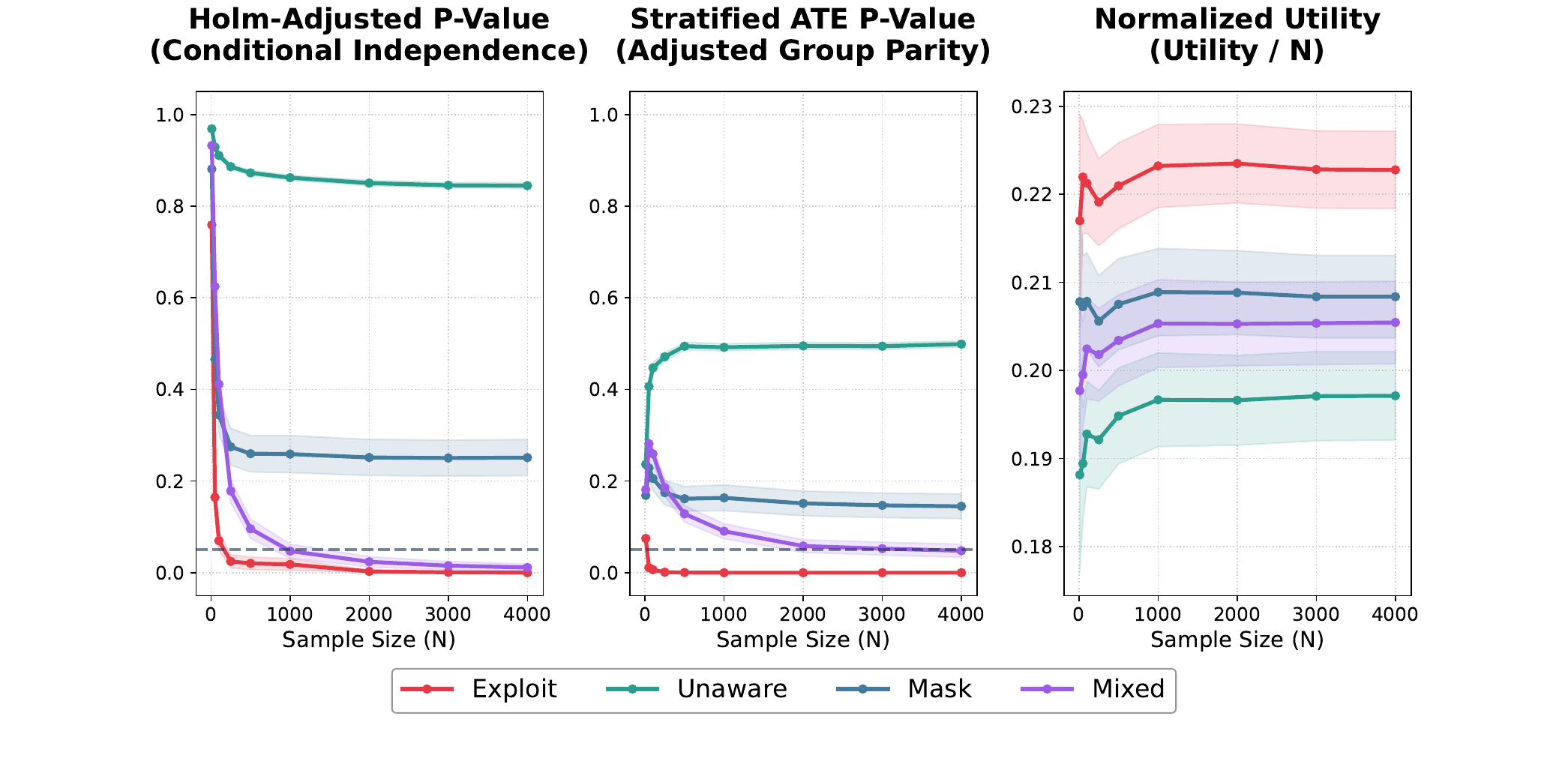}
    \caption{Comparison of exploit, unaware, mask, and mixed strategies
    as the sample size $N$ grows, on synthetic data with $K = 3$ and
    $\rho = 0.3$. Left: Holm-adjusted $p$-value for $H_{\text{CATE}}$
    (conditional independence). Middle: stratified-effect $p$-value
    using uniformly-weighted CATEs. Right: normalized cumulative
    utility per decision. The mask strategy retains the high
    conditional-independence $p$-value of the unaware policy while
    achieving utility comparable to the mixed strategy, demonstrating
    that masking dominates randomization on the utility-detectability
    frontier.}
    \label{fig:mixed_comparison}
\end{figure}

The mixed strategy traces out a Pareto frontier between the exploit
and unaware extremes: by tuning the mixing rate, the optimizer can
move smoothly along the line connecting them in the
utility-detectability plane. The masked policy sits \emph{above}
this frontier. At any given detection longevity --- the sample size
at which $H_{\text{CATE}}$ is first rejected --- the masked policy
achieves higher utility than any mixed strategy of unconstrained
policies, and at any given utility level, the masked policy survives
substantially longer before being detected.

The reason is structural. Mixing between exploit and unaware produces
CATEs that, in expectation, are simply rescaled versions of the
exploit policy's CATEs: the same heterogeneity, attenuated. The
conditional-independence test detects these CATEs at the same rate
as the exploit policy's, just with a smaller effective sample size.
Masking, by contrast, constructs CATEs whose signs are arranged to
cancel under the regulator's averaging weights, so the detection
slowdown is geometric rather than a constant-factor amplitude
reduction. No mixing rate of unconstrained policies can recover this
property; the explicit CATE cancellation that defines masking is
doing work that randomization between exploit and unaware cannot
replicate.



\end{document}